\documentclass{article}

\usepackage{microtype}
\usepackage{graphicx}
\usepackage{subcaption}
\usepackage{booktabs} 
\usepackage{multirow}
\usepackage{makecell}

\usepackage{hyperref}

\usepackage[preprint]{icml2026}

\usepackage{amsmath}
\usepackage{amssymb}
\usepackage{mathtools}
\usepackage{amsthm}

\usepackage[capitalize,noabbrev]{cleveref}

\theoremstyle{plain}
\newtheorem{theorem}{Theorem}[section]
\newtheorem{proposition}[theorem]{Proposition}

\theoremstyle{definition}
\newtheorem{definition}[theorem]{Definition}

\theoremstyle{remark}

\usepackage[textsize=tiny]{todonotes}

\icmltitlerunning{
Learning Additively Compositional Latent Actions for Embodied AI}

\begin{document}

\twocolumn[
  \icmltitle{
  Learning Additively Compositional Latent Actions for Embodied AI}



  \icmlsetsymbol{equal}{*}

  \begin{icmlauthorlist}
    \icmlauthor{Hangxing Wei}{whu}
    \icmlauthor{Xiaoyu Chen}{thu}
    \icmlauthor{Chuheng Zhang}{comp}
    \icmlauthor{Tim Pearce}{comp}
    \icmlauthor{Jianyu Chen}{thu}
    \icmlauthor{Alex Lamb}{thu}
    \icmlauthor{Li Zhao}{comp}
    \icmlauthor{Jiang Bian}{comp}
  \end{icmlauthorlist}

  \icmlaffiliation{whu}{Wuhan University}
  \icmlaffiliation{thu}{Tsinghua University}
  \icmlaffiliation{comp}{Microsoft Research}

  \icmlcorrespondingauthor{Li Zhao}{lizo@microsoft.com}

  \icmlkeywords{Machine Learning, ICML}

  \vskip 0.3in
]



\printAffiliationsAndNotice{}  

\begin{abstract}
%
Latent action learning infers pseudo-action labels from visual transitions, providing an approach to leverage internet-scale video for embodied AI.
%
However, most methods learn latent actions without structural priors that encode the additive, compositional structure of physical motion.
%
As a result, latents often entangle irrelevant scene details or information about future observations with true state changes and miscalibrate motion magnitude.
%
We introduce \textit{Additively Compositional Latent Action Model} (AC-LAM), which enforces scene-wise additive composition structure over short horizons on the latent action space. 
These AC constraints encourage simple algebraic structure in the latent action space~(identity, inverse, cycle consistency) and 
suppress information that
does not compose additively.
Empirically, AC-LAM learns more structured, motion-specific, and displacement-calibrated latent actions and provides stronger supervision for downstream policy learning, outperforming state-of-the-art LAMs across simulated and real-world tabletop tasks. 
\end{abstract}

\section{Introduction}

Latent action learning has emerged as a scalable paradigm in embodied AI, enabling pretraining from internet-scale video by deriving pseudo-action labels from visual transitions~\citep{bruce2024genie,ye2024latent}.
However, existing methods lack structural priors that reflect the compositional nature of physical motion.
Consequently, learned latents often entangle irrelevant information (e.g., scene details, future observations) with pure state changes and miscalibrate motion magnitude,
weakening transfer, planning, and generalization. 
As latent actions encode motion semantics, it is natural to expect their norm to reflect motion magnitude.
However, 
lacking explicit structural constraints, latent action learning methods usually fail to capture this relationship. As shown in Figure~\ref{fig:latent-norm-trajectory} for a real‑robot trajectory, the latent action norm $\|LAM(o_0, o_t)\|$ from LAPA~\citep{ye2024latent}
is systematically under‑calibrated for motion magnitude~(displacement).
Accurately calibrating displacement and enforcing compositional structure for latent actions is thus critical for robust control and generalization.
\begin{figure}[t]
    \centering
    \includegraphics[width=0.48\textwidth]{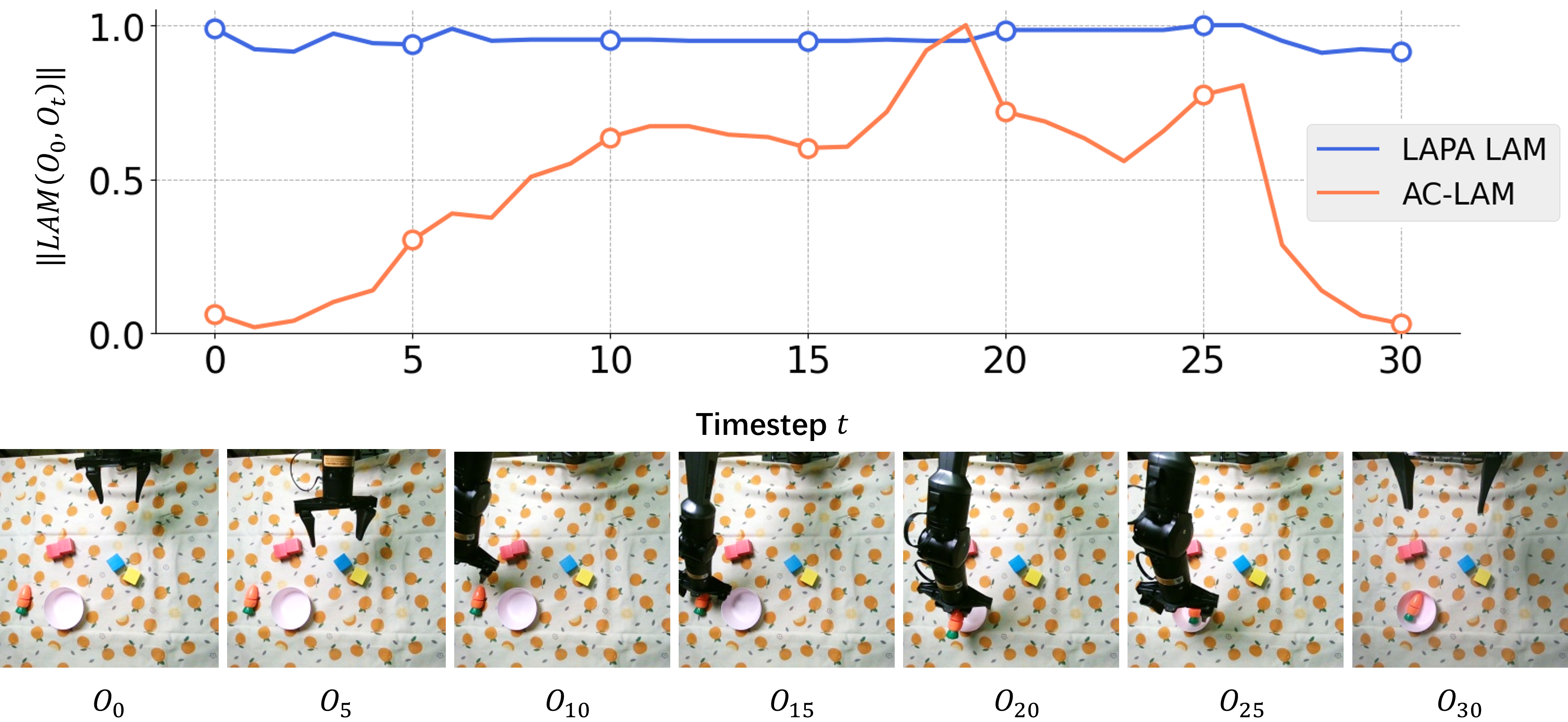}
    \caption{
    Evolution of the normalized latent action norm $\|LAM(o_0, o_t)\|$ over time intervals $t$. The figure shows that, compared with baselines, our AC constraints (AC-LAM) induce displacement-calibrated latent actions, effectively capturing the magnitude of the transition from $o_0$ to $o_t$.
    }
    \label{fig:latent-norm-trajectory}
\end{figure}

We address these limitations by introducing the \emph{Additively Compositional Latent Action Model} (AC-LAM), which enforces scene-wise additive composition structure. 
Concretely, latent actions inferred between observations $o_i,o_j,o_k$ from the same scene satisfy
\[
z_{ik} = z_{ij} + z_{jk},
\]
aligning the representation with the linear and compositional structure of short-horizon motion semantics. This structural prior regularizes the latent space toward more structured latent actions that compose additively within the same scene. 
While this prior may not hold for all possible actions, basic motion primitives often compose additively (e.g. `move right' then `move up', can be composed as `move diagonally right and up').


Concretely, we show that AC constraints 
encourage desirable algebraic structure in the latent action space: 
identity ($z_{ii}=0$) and inverse ($z_{ji}=-z_{ij}$) elements, and cycle consistency~($z_{ij} + z_{jk} + z_{ki} = 0$). 
Moreover, under simplified assumptions, the constraints 
suppress information that does not compose additively:
static environment terms and future leakage are suppressed because they violate additive consistency across triples. 
While linearity is an approximation in practice, AC acts as a regularizer that promotes motion-centric latents and reduces entanglement with non-compositional factors.
Based on the AC prior, we propose AC-LAM, a novel LAM that enforces AC constraints through a compositional loss in the forward-dynamics (FDM) model, denoted $\mathcal{L}_{\mathrm{AC\text{-}FDM}}$. For triples $(o_i,o_j,o_k)$ from the same trajectory, we reconstruct $o_k$ by decoding from the summed latents $z_{ij}+z_{jk}$ (Eq.~\ref{eq:ac-fdm-impl}). The $\mathcal{L}_{\mathrm{AC\text{-}FDM}}$ loss operates on post-VQ continuous embeddings, preserving displacement calibration while retaining the VQ bottleneck.
We find $\mathcal{L}_{\mathrm{AC\text{-}FDM}}$ is empirically more stable than a version of the loss implemented in the IDM (Eq.~\ref{eq:ac-idm-impl}), and can be optimized jointly with standard reconstruction and bottleneck terms over latent actions sampled within short horizons.

Our empirical results show that AC-LAM learns more interpretable, motion-specific, and displacement-calibrated latent actions: they correlate more strongly with true displacement~(as shown in Figure \ref{fig:latent-norm-trajectory}), adhere more closely to identity/inverse behavior, and reduce information captured about environment aesthetics and future observations. Using latents generated by AC-LAM as 
training supervision targets improves downstream policy learning compared to state-of-the-art latent action models across simulated and real-world tabletop tasks. In short, by incorporating a prior reflecting the nature of physical motion, AC-LAM provides advantages over unstructured latents.


Our main contributions are:
\begin{itemize}
\item \textbf{Structural prior}: We introduce a scene-wise additive composition prior over short horizons, encouraging \(z_{ik} \approx z_{ij} + z_{jk}\) in the latent action space.
\item \textbf{Analysis}: We show that AC gives rise to identity and inverse elements, yields cycle consistency, and suppresses information that
does not compose additively.
\item \textbf{Practice}: We enforce AC through a compositional loss
in the forward-dynamics model, jointly optimized with reconstruction/bottleneck losses and short-horizon sampling.
\item \textbf{Empirics}: 
AC-LAM yields more structured latents which provide stronger policy supervision across simulation and real robots, surpassing  state-of-the-art LAMs.
\end{itemize}
\section{Related Work}

\paragraph{Latent Action Learning}

Latent action learning abstracts temporal dynamics in video by modeling inter-frame visual change. Early works in discrete settings (e.g., LAPO~\citep{schmidt2023learning}, Genie~\citep{bruce2024genie}) extract latent actions in 2D platformer games. Recent approaches extend to human/robot videos for continuous control (e.g., LAPA~\citep{ye2024latent}, IGOR~\citep{chen2024igor}, MotoGPT~\citep{chen2024moto}), often using VQ-VAE to discretize latents. UniVLA~\citep{bu2025univlalearningacttaskcentric} learns task-centric action representations via language conditioning. Continuous latent actions with VAE regularization instead of VQ (e.g., CLAM~\citep{liang2025clam}, COMO~\citep{yang2025como}, LAWM~\citep{garrido2026}) offer alternative bottlenecks. Optical-flow-based methods (e.g., Motus~\citep{bi2025motus}, ViPRA~\citep{routray2025vipra}, LAOF~\citep{bu2025laof}) emphasize motion-centric latents. Label-supervised designs (e.g., LAOM~\citep{nikulin2025latent}, Linear LAM~\citep{zhang2025what}, villa-X~\citep{chen2025villax}, CLAP~\citep{zhang2026clap}) primarily leverage action labels and/or proprioceptive states to suppress distractor-induced variations. Temporally extended latents are explored in VideoWorld~\citep{ren2025videoworld} and SSM-VLA~\citep{cai2025SSMVLA}, and viewpoint-invariant latents are explored in MVP-LAM~\citep{MVP-LAM}.
While prior LAMs span discretization, conditioning, supervision, motion cues, and temporal extension, AC-LAM is the first to explicitly endow the latent action space with an additive compositional prior, yielding structured latents that strengthen downstream policy learning.
\vspace{-0.3cm}
\paragraph{Compositionality in Deep Learning}
Compositionality has been extensively studied in embeddings, starting with word vectors where linear semantic arithmetic holds (e.g., vec(“Russia”) + vec(“river”) $\approx$ vec(“Volga River”)) and extending to paraphrase and sentence embeddings \citep{mikolov2013distributed,mikolov2013efficient,wieting2015paraphrase,AroraLM17}, as well as vision–language representations that exhibit approximately linear subspaces and compositional generalization \citep{trager2024linear,berasi2025text}. The most closely related to our work is Adaworld \citep{gao2025adaworld}, which demonstrates latent action composition in games with a discrete action space learned without explicit structural constraints. CoLA-World~\cite{wang2025cola_world} also learns a latent-action world model but does not address latent action composition. 
In contrast, we focus on latent action learning for embodied AI with continuous control, enforcing scene-wise additive composition and introducing a structural prior over the latent action space.

\section{Method}

We present the \emph{Additively Compositional Latent Action Model} (AC-LAM), 
which enforces scene-wise additivity on the latent action space.
This structural prior aligns the latent action with the physical nature of robotic motions,
yielding more interpretable, motion-specific, and displacement-calibrated
latent actions.
Throughout, we use \emph{additive} and \emph{additively compositional} interchangeably.

\subsection{Problem Formulation}

Let \(\mathcal{O}\) denote the observation space (e.g., images), 
and let \(\mathcal{Z} \subseteq \mathbb{R}^d\) denote the latent action space. 
Each observation $o \in \mathcal{O}$ belongs to a scene denoted by $e(o)$.
\begin{definition}[Scene]
Two observations $o_i, o_j$ are considered to be within the same scene, $e(o_i)=e(o_j)$, if some sequence of actions exists that the agent can take to get between $o_i$ and $o_j$. Intuitively, a scene contains information about embodiment, background, objects, and so on.
\label{def_scene}
\end{definition}

Our goal is to learn a latent action model (LAM) which consists of: 
\begin{itemize}
    \item An inverse dynamics model~(IDM) 
    \(f: \mathcal{O} \times \mathcal{O} \to \mathcal{Z}\) 
    such that latent action \(z_{ij} := f(o_i, o_j)\) encodes the transformation from 
    \(o_i\) to \(o_j\), where $e(o_i)=e(o_j)$.
    \item A forward dynamics model~(FDM) \(F: \mathcal{Z} \times \mathcal{O} \to \mathcal{O}\) 
    such that $o_j = F(o_i, z_{ij})$.
    We also use the notation $F_{z}(o) := F(o,z)$. 
\end{itemize}

LAMs are generally trained using a reconstruction loss.
\begin{equation}
\label{eq:rec_loss}
\mathcal{L}_{\mathrm{rec}} \;=\; \mathbb{E}_{i,j}\,\big\|\,o_j - F_{z_{ij}}(o_i)\big\|_2^2
\end{equation}

\subsection{Additively Compositional Latent Actions}
\label{subsec:AC-LA}

We enforce the \emph{(scene-wise) additive structure} on the latent actions induced within the same scene.
\begin{equation}
\begin{aligned}
    z_{ik} & = z_{ij} + z_{jk}, \\
    & \forall o_i, o_j, 
    o_k, \ s.t. \ e(o_i) = e(o_j) = e(o_k)
\end{aligned}
\label{eq:additive-env}
\end{equation}
This constraint reflects the intuition that the latent action from $o_i$ to $o_k$ 
can be decomposed as the sum of those from $o_i$ to $o_j$ and from $o_j$ to $o_k$. 

The additive structure on the latent action space can also be applied to the FDM, 
\begin{equation}
\begin{aligned}
F_{z_{jk}}\big(&F_{z_{ij}}(o_i)\big) = F_{z_{ij}+z_{jk}}(o_i), \\
    & \forall o_i, o_j, o_k, \ s.t. \ e(o_i) = e(o_j) = e(o_k)
\label{eq:fdm-comp-env}
\end{aligned}
\end{equation}
since on the LHS we have $F_{z_{jk}}\big(o_j\big) = o_k$ and RHS $F_{z_{ik}}(o_i)=o_k$.
The additive structure of the IDM form in Eq.~\eqref{eq:additive-env} and 
the FDM in Eq.~\eqref{eq:fdm-comp-env} will later be used to construct the auxiliary loss.

Although the motion composition nature of the rigid-body motion described by the Lie group $SE(3)$ is matrix-multiplicative, 
small inter-frame motions captured in common LAMs can be approximated as additive in a Euclidean space, 
under the Baker--Campbell--Hausdorff (BCH) approximation~\citep{barfoot2024state}. 
%
%
Concretely, the movement of a robot arm end-effector (say $p_t=[x,y,z,1]$) can be represented as a matrix $T_t \in \mathbb{R}^{4\times4}$, consisting of a rotation matrix $R\in\mathbb{R}^{3\times3}$ and translation vector $t\in\mathbb{R}^3$. Consecutive transformations are composed through matrix multiplication.
The translational aspect composes as the addition of xyz coordinates, shown below. 
While the the rotational composition is matrix-multiplicative, the BCH approximation models it as vector addition in the form of axis angle when the rotation is small.
\begin{align*}
p_3&= T_2 p_2 = T_2 (T_1 p_1)
\\
T_2 T_1 &=
\begin{bmatrix}
R_2 & t_2 \\
0 & 1
\end{bmatrix}
\begin{bmatrix}
R_1 & t_1 \\
0 & 1
\end{bmatrix}
=
\begin{bmatrix}
R_2 R_1 & R_2 t_1 + t_2 \\
0 & 1
\end{bmatrix}
\end{align*}

Further, the additive assumption provides two practical advantages. 
First, it makes structural priors easy to express as differentiable training objectives.
For example, this property can be turned into simple residuals (e.g., $z_{ik} - z_{ij} - z_{jk}$) 
with well-behaved gradients, without additional tricks (e.g., logarithm mapping) to handle non-linearity.
Second, it improves interpretability by calibrating the vector norm $\|z\|$ with displacement empirically.
Specifically, we observe that, in this additive space, the vector norm $\|z\|$ empirically tends
to correlate with physical motion magnitude over small time steps, yielding a transparent ``amount of motion'' signal. 

\begin{figure*}[htbp]
    \centering
    \includegraphics[width=0.8\textwidth]{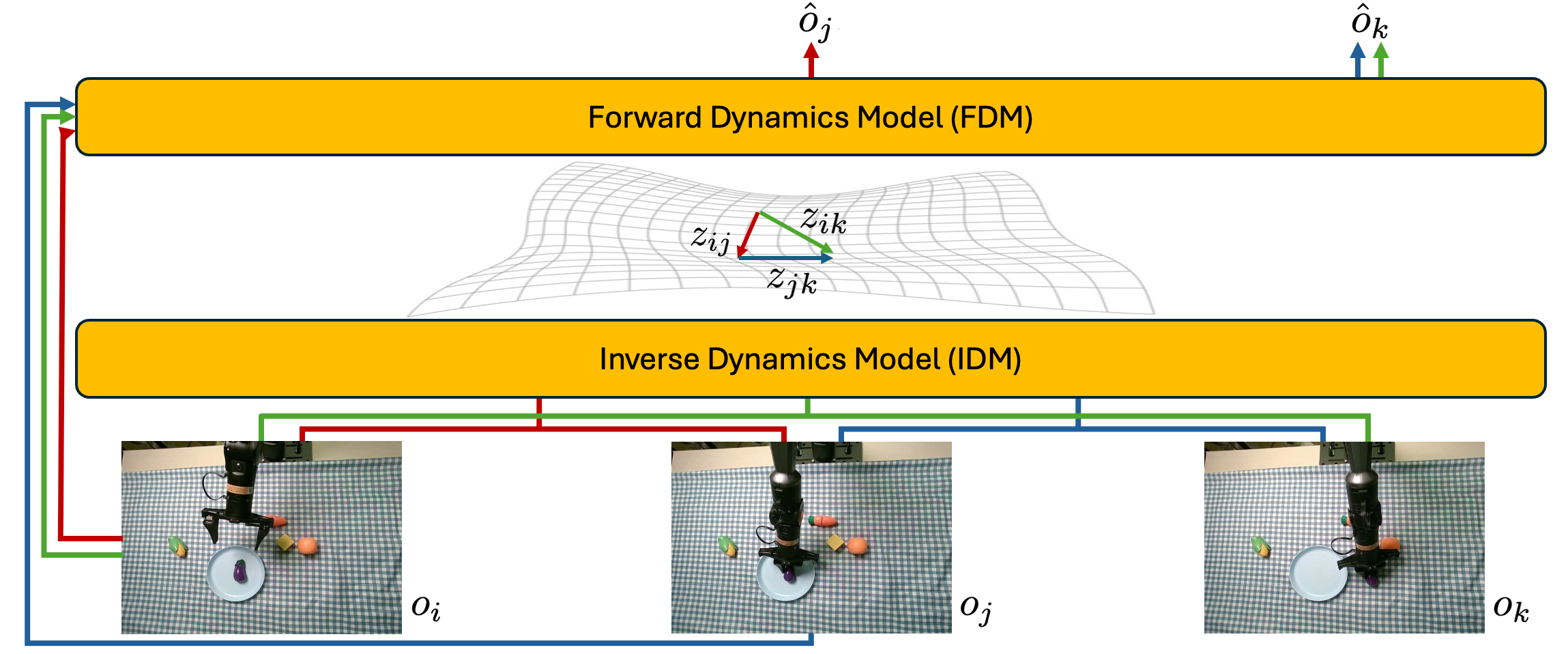}
    \caption{\textbf{Additively Compositional Latent Action Model~(AC-LAM).} 
    For triples $(o_i,o_j,o_k)$ from the same scene, scene-wise additivity encourages $z_{ik}\approx z_{ij}+z_{jk}$, which regularizes the latent action space on top of a standard IDM–FDM architecture. The red line denotes the $(i,j)$ mapping with IDM encoder $z_{ij} = f(o_i, o_j)$ and FDM decoder $\hat{o}_j=F_{z_{ij}}(o_i)$. The blue and green lines depict the corresponding mappings for $(j,k)$ and $(i,k)$, respectively. 
    }
    \label{fig:example}
\end{figure*}

\subsection{Analysis}
We now formally study what is implied by additivity in the latent action space.  
The below considers observations coming from the same scene $e(o)$.

Based on additivity defined in \eqref{eq:additive-env}, we can derive the following propositions about identity, inverse consistency, and cycle consistency.
\begin{proposition}[Identity]
\label{prop:identity}
It holds that $z_{ii}=0$.  
\end{proposition}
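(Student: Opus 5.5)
The plan is to specialize the additivity constraint \eqref{eq:additive-env} to a degenerate triple and cancel. Additivity is assumed for every triple $o_i,o_j,o_k$ drawn from the same scene. A single observation $o_i$ trivially lies in the same scene as itself: by Definition~\ref{def_scene}, the empty action sequence takes $o_i$ to $o_i$, so $e(o_i)=e(o_i)$. We may therefore choose $j=i$ and $k=i$, which makes the triple $(o_i,o_i,o_i)$ admissible.

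With this choice, \eqref{eq:additive-env} reads
\begin{equation*}
z_{ii} \;=\; z_{ii} + z_{ii}.
\end{equation*}
Here $z_{ii}=f(o_i,o_i)$ is one fixed vector in $\mathcal{Z}\subseteq\mathbb{R}^d$. Subtracting $z_{ii}$ from both sides, which is legitimate since $\mathbb{R}^d$ is an additive group, gives $z_{ii}=0$. Equivalently, one could take $k=j$ with $j$ arbitrary in the same scene, obtaining $z_{ij}=z_{ij}+z_{jj}$ and hence $z_{jj}=0$; this is the same argument.

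There is essentially no obstacle. The only point to check is that the quantifier in \eqref{eq:additive-env} allows repeated indices, i.e.\ that the observations in the triple need not be distinct. The statement is written ``for all $o_i,o_j,o_k$'' with only the same-scene condition, so repeated indices are allowed, and the reflexivity of the scene relation handles that condition. I would remark that the identity holds exactly under the idealized constraint. When additivity is only enforced approximately as a loss, the same substitution gives $\|z_{ii}\|\le\|z_{ii}-2z_{ii}\|$, that is, $\|z_{ii}\|$ is bounded by the additivity residual on the triple $(o_i,o_i,o_i)$.
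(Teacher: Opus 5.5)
Your proof is correct and is essentially the paper's argument. The paper specializes additivity to the triple $(i,i,k)$ to get $z_{ik}=z_{ii}+z_{ik}$ and cancels $z_{ik}$, while you use $(i,i,i)$ (or $(i,j,j)$), which is the same cancellation on a different degenerate triple; your explicit check that the scene relation is reflexive, so repeated indices are admissible, is a point the paper leaves implicit.
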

\begin{proof}
Apply additivity to $(i,i,k)$ to get $z_{ik}=z_{ii}+z_{ik}$.  
Cancel $z_{ik}$ on both sides to obtain $z_{ii}=0$.  
\end{proof}
\begin{proposition}[Inverse consistency]
\label{prop:inverse}
For any pair $(i,j)$ in the same scene, it holds that $z_{ji}=-z_{ij}$.  
\end{proposition}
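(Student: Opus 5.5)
The plan is to apply the additivity relation \eqref{eq:additive-env} to a degenerate triple, in the same way as the proof of Proposition~\ref{prop:identity}, and then invoke that proposition. Since $o_i$ and $o_j$ lie in the same scene, the triple $(o_i, o_j, o_i)$ satisfies $e(o_i)=e(o_j)=e(o_i)$. It is therefore a legitimate instance of \eqref{eq:additive-env} with $k=i$. This gives
\begin{equation*}
z_{ii} \;=\; z_{ij} + z_{ji}.
\end{equation*}

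Next, I will use Proposition~\ref{prop:identity}, which gives $z_{ii}=0$. Substituting this yields $0 = z_{ij} + z_{ji}$, and rearranging in the vector space $\mathcal{Z}\subseteq\mathbb{R}^d$ gives $z_{ji} = -z_{ij}$, as claimed.

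I do not expect a real obstacle here. The one point to check is that \eqref{eq:additive-env} is quantified over all triples from a common scene, with no distinctness requirement. That is what allows the repeated index $k=i$, exactly as the proof of Proposition~\ref{prop:identity} used $(i,i,k)$. I will also note that the relation is symmetric in $i$ and $j$, since the scene relation of Definition~\ref{def_scene} is symmetric, so the same argument applied to $(o_j, o_i, o_j)$ gives the consistent statement $z_{ij} = -z_{ji}$.
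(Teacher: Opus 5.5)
Your proposal is correct and is exactly the paper's proof. It applies additivity to the triple $(i,j,i)$ to get $z_{ii}=z_{ij}+z_{ji}$, substitutes $z_{ii}=0$ from Proposition~\ref{prop:identity}, and rearranges.
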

\begin{proof}
Apply additivity to $(i,j,i)$ to get $z_{ii}=z_{ij}+z_{ji}$.  
Use Proposition~\ref{prop:identity} to substitute $z_{ii}=0$.  
Rearranging yields $z_{ji}=-z_{ij}$.  
\end{proof}

\begin{proposition}[Cycle consistency]
For any cycle $i_0\!\rightarrow\! i_1\!\rightarrow\!\cdots\!\rightarrow\! i_m=i_0$ within a scene, we have $\sum_{t=0}^{m-1} z_{i_t i_{t+1}}=0$.  
\end{proposition}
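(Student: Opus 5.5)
The plan is to telescope the sum using the additivity relation \eqref{eq:additive-env}, and then close the cycle with Proposition~\ref{prop:identity}. All indices $i_0,\dots,i_m$ lie in the same scene, so additivity applies to every triple of them. The key intermediate claim is that for every $1\le r\le m$,
\begin{equation*}
\sum_{t=0}^{r-1} z_{i_t i_{t+1}} = z_{i_0 i_r}.
\end{equation*}

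I will prove this claim by induction on $r$. The base case $r=1$ is trivial. For the inductive step, assume the claim holds for $r$. Add $z_{i_r i_{r+1}}$ to both sides, then apply \eqref{eq:additive-env} to the triple $(i_0,i_r,i_{r+1})$. This gives $z_{i_0 i_r}+z_{i_r i_{r+1}}=z_{i_0 i_{r+1}}$, which is the claim for $r+1$.

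Setting $r=m$ and using $i_m=i_0$, the full cycle sum equals $z_{i_0 i_0}$. By Proposition~\ref{prop:identity}, $z_{i_0 i_0}=0$, which completes the argument.

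There is no real obstacle here. The only points needing care are that repeated indices inside the cycle (for example $i_t=i_{t+1}$, or a vertex revisited) are harmless, because \eqref{eq:additive-env} is stated for arbitrary triples and not only distinct ones. Degenerate cycles are also covered: for $m=1$ the sum is just $z_{i_0 i_0}=0$, and for $m=2$ the statement reduces to Proposition~\ref{prop:inverse}.
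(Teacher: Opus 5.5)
Your proof is correct and is the paper's argument: the paper also telescopes the cycle sum by repeatedly applying additivity to get $z_{i_0 i_m}$, then uses $i_m=i_0$ and Proposition~\ref{prop:identity}. Your induction on $r$ through the triples $(i_0,i_r,i_{r+1})$ simply makes the paper's ``repeatedly apply additivity'' step explicit, and your remarks on repeated indices and the cases $m=1,2$ are accurate.
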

\begin{proof}
Repeatedly apply additivity along the path to obtain $\sum_{t=0}^{m-1} z_{i_t i_{t+1}}=z_{i_0 i_m}$.  
Since $i_m=i_0$, Proposition~\ref{prop:identity} gives $z_{i_0 i_m}=z_{i_0 i_0}=0$.  
\end{proof}

\paragraph{What does additivity bring to latent actions?}
We will show that enforcing additivity on the latent action discourages it to capture non-additive components.
Two typical examples for the non-additive components are scene-relative information (e.g., information about the background in the scene) and future leakage information (e.g., the target observation in the prediction of FDM).
This information would ideally not appear in the latent action as it is irrelevant to motion and control. 
However, in practice it does often appear in latent action models~\citep{li2025latbot, garrido2026}, despite bottlenecks designed to suppress it, as it provides a shortcut to optimize for the reconstruction loss~\eqref{eq:rec_loss}. 

First, we show that the additive structure discourages encoding static scene identifiers as a constant offset in $z$.  

\begin{proposition}[No scene-related bias]
In a fixed scene $s$,  
assume a decomposition $z_{ij}=\tilde{z}_{ij}+b_s$ for all $(i,j)$ in scene $s$, where $\tilde{z}_{ij}$ represents movement and $b_s\in\mathcal{Z}$ is constant w.r.t.\ $(i,j)$.  
If both $z$ and $\tilde{z}$ satisfy additivity within scene $s$, then $b_s=0$.  
\end{proposition}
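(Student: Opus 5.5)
The plan is to substitute the decomposition directly into the additivity relation \eqref{eq:additive-env} for a single triple and compare with the additivity of $\tilde z$. Fix any triple $(o_i,o_j,o_k)$ in scene $s$. Additivity of $z$ gives $z_{ik}=z_{ij}+z_{jk}$. Writing each term as $\tilde z+b_s$ turns this into
\begin{equation*}
\tilde z_{ik}+b_s=\tilde z_{ij}+\tilde z_{jk}+2b_s .
\end{equation*}
Additivity of $\tilde z$ within scene $s$ lets me replace $\tilde z_{ij}+\tilde z_{jk}$ by $\tilde z_{ik}$. Cancelling $\tilde z_{ik}$ then leaves $b_s=2b_s$, hence $b_s=0$.

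An equivalent route, which I would mention as a sanity check, uses Proposition~\ref{prop:identity}. It applies to both $z$ and $\tilde z$, since each satisfies additivity within $s$, so $z_{ii}=0$ and $\tilde z_{ii}=0$. Evaluating the decomposition at $(i,i)$ gives $0=0+b_s$ immediately.

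There is no real obstacle here. The only point needing care is that the argument requires at least one observation in scene $s$, so that some triple, or the pair $(i,i)$, exists; scenes are nonempty by definition. I would also note that the conclusion uses nothing beyond the vector-space structure of $\mathcal{Z}\subseteq\mathbb{R}^d$ (cancellation), so it holds for every scene $s$ separately.
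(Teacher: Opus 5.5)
Your main argument is correct and matches the paper's proof: substitute $z_{pq}=\tilde z_{pq}+b_s$ into $z_{ik}=z_{ij}+z_{jk}$, use additivity of $\tilde z$ to collapse $\tilde z_{ij}+\tilde z_{jk}$ to $\tilde z_{ik}$, and cancel to get $b_s=2b_s$, hence $b_s=0$. Your alternative check via Proposition~\ref{prop:identity}, applied to both $z$ and $\tilde z$ at the pair $(i,i)$, is also valid.
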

\begin{proof}
From additivity $z_{ik}=z_{ij}+z_{jk}$,
substitute $z_{pq}=\tilde{z}_{pq}+b_s$ to get $\tilde{z}_{ik}+b_s=(\tilde{z}_{ij}+b_s)+(\tilde{z}_{jk}+b_s)$.  
Use additivity of $\tilde{z}$ to replace $\tilde{z}_{ij}+\tilde{z}_{jk}$ by $\tilde{z}_{ik}$.  
This yields $\tilde{z}_{ik}+b_s=\tilde{z}_{ik}+2b_s$, hence $b_s=0$.  
\end{proof}


Next, we show that additivity also suppresses another common pitfall in LAM where $z_{ij}$ ignores the motion and directly embeds the goal of FDM prediction $o_j$.  
\begin{proposition}[No future leakage]
In a fixed scene $s$,  
assume a decomposition $z_{ij}=\tilde{z}_{ij}+g_j$ for all $(i,j)$ in scene $s$, where $g_j\in\mathcal{Z}$ depends only on the goal index $j$.  
If both $z$ and $\tilde{z}$ satisfy additivity within scene $s$, then $g_j=0$ for all $j$ in scene $s$.  
\end{proposition}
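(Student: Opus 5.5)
The plan mirrors the proof of the no-scene-bias proposition, except that the offset now depends on the goal index. I will substitute the decomposition into the additivity relation, cancel the movement part using additivity of $\tilde{z}$, and read off a linear constraint on the $g$'s.

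\textbf{Step 1 (substitute).} Take any triple $(i,j,k)$ in scene $s$. Writing $z_{pq}=\tilde{z}_{pq}+g_q$, the additivity $z_{ik}=z_{ij}+z_{jk}$ becomes
\[
\tilde{z}_{ik}+g_k=\tilde{z}_{ij}+g_j+\tilde{z}_{jk}+g_k .
\]

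\textbf{Step 2 (cancel).} Additivity of $\tilde{z}$ lets us replace $\tilde{z}_{ij}+\tilde{z}_{jk}$ by $\tilde{z}_{ik}$. After cancelling $\tilde{z}_{ik}$ and $g_k$, we are left with $g_j=0$. Since $j$ was an arbitrary intermediate index, this gives $g_j=0$ for every $j$ in scene $s$.

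\textbf{Step 3 (cross-check).} As a consistency check, we can also argue via Proposition~\ref{prop:identity}, applied to both $z$ and $\tilde{z}$, which are each additive. This gives $z_{jj}=0$ and $\tilde{z}_{jj}=0$, hence $g_j=z_{jj}-\tilde{z}_{jj}=0$. This route is even shorter and only uses the identity triples $(j,j,j)$.

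\textbf{Main caveat.} The only delicate point is ensuring every index $j$ appears as the middle element of some admissible triple. This is immediate, because the triple $(j,j,j)$ or $(i,j,k)$ with arbitrary $i,k$ in the same scene is allowed by Eq.~\eqref{eq:additive-env}. So no real obstacle remains, and the argument is purely algebraic.
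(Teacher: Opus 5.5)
Your proof is correct and follows the paper's argument exactly in Steps 1--2: substitute $z_{pq}=\tilde{z}_{pq}+g_q$ into $z_{ik}=z_{ij}+z_{jk}$, use additivity of $\tilde{z}$ to replace $\tilde{z}_{ij}+\tilde{z}_{jk}$ by $\tilde{z}_{ik}$, and cancel to obtain $g_j=0$. Your Step~3 cross-check is also valid: it applies Proposition~\ref{prop:identity} to both $z$ and $\tilde{z}$ to get $g_j=z_{jj}-\tilde{z}_{jj}=0$, and needs only the triple $(j,j,j)$.
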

\begin{proof}
From additivity $z_{ik}=z_{ij}+z_{jk}$,
substitute $z_{pq}=\tilde{z}_{pq}+g_q$ to get $\tilde{z}_{ik}+g_k=(\tilde{z}_{ij}+g_j)+(\tilde{z}_{jk}+g_k)$.  
Use additivity of $\tilde{z}$ to replace $\tilde{z}_{ij}+\tilde{z}_{jk}$ by $\tilde{z}_{ik}$.  
This yields $\tilde{z}_{ik}+g_k=\tilde{z}_{ik}+g_j+g_k$, hence $g_j=0$.  
\end{proof}

While above results rest on simplified assumptions—e.g., approximate linear separability between scene/goal terms and motion semantics—that may not hold in practice, they clarify how additive structure regularizes latent actions:
to encode additive components (which are typically related to rigid-body movement) while suppressing the non-additive components such as the scene-related and goal-only terms, reducing these common pitfalls for LAM training.


\subsection{Implementation}


Out of convenience, we treat each trajectory as a scene (by Def.~\ref{def_scene} the observations within a trajectory are reachable with the given sequence of actions).
Note this results in a finer division of scenes than strictly necessary.


To enforce the additive constraint on the latent action space there are two possible candidates. Firstly, one could consider applying it to the IDM in Eq.~\eqref{eq:additive-env}.
\begin{equation}
\mathcal{L}_{\mathrm{AC\text{-}IDM}}=\mathbb{E}_{i,j,k}\,\|f(o_i,o_k)-f(o_i,o_j)-f(o_j,o_k)\|_2^2.
\label{eq:ac-idm-impl}
\end{equation}
However, empirically we found that this IDM form led to instability during optimization (a trivial a solution collapses all latents to zero $f(o_i,o_j) = 0 \; \forall i, j$), requiring target stabilization techniques (e.g., EMA~\cite{grill2020byol,he2019momentum} or frozen targets~\cite{mnih2015humanlevel}).

The second option for enforcing additivity -- through the FDM in Eq.~\eqref{eq:fdm-comp-env} -- is more stable. 
\begin{equation}
\mathcal{L}_{\mathrm{AC\text{-}FDM}}=\mathbb{E}_{i,j,k}\,\|\,o_k-F(o_i, f(o_i,o_j)+f(o_j,o_k))\|_2^2.
\label{eq:ac-fdm-impl}
\end{equation}
Hence, the final AC-LAM objective is
\begin{equation}
\mathcal{L} \;=\; \mathcal{L}_{\mathrm{rec}} \;+\; \mathcal{L}_{\mathrm{reg}} \;+\; \lambda_{\mathrm{AC}}\,\mathcal{L}_{\mathrm{AC\text{-}FDM}}
\end{equation}
where $\mathcal{L}_{\mathrm{rec}}$ is from Eq.~\eqref{eq:rec_loss}, $\mathcal{L}_{\mathrm{reg}}$ comprises VQ-VAE codebook and commitment losses~\citep{ye2024latent,chen2025villax}, and $\lambda_{\mathrm{AC}}$ weights the AC-FDM term.

We apply AC constraints to post‑VQ continuous embeddings used as latent actions. To align with prior LAMs and enable fair comparison, we adopt a VQ‑VAE bottleneck \citep{van2017neural,ye2024latent,chen2025villax}, though AC is bottleneck‑agnostic in principle. Our implementation builds on the villa‑X design~\citep{chen2025villax} for its strong performance and includes a proprioceptive FDM for reconstructing robot state.

\paragraph{Latent Action Evaluation}
We evaluate AC-LAM by training a policy with latent actions as supervision and testing it in closed loop.  
We adopt the policy architecture in villa-X (see Appendix \ref{appendix:policy training}). 
Policies are trained with continuous post‑VQ latent embeddings from the latent action model as supervision, consistent with villa‑X. Downstream policy performance serves as a proxy for latent‑action quality.


\section{Experiments}
Our experiments aim to answer the following questions:

\begin{itemize}
    \item Q1. Does AC-LAM learn well-structured latent actions?
    \item Q2. Does AC-LAM outperform state-of-the-art LAMs such as those in LAPA, UniVLA, and Villa-X in downstream policy learning? 
    \item Q3. How do different design choices affect the performance of AC-LAM? 
\end{itemize}

\subsection{Experimental Setup}

\paragraph{Baselines} We compare AC-LAM against the following baseline latent action models:
\begin{itemize}
    \item \textbf{LAPA LAM}~\citep{ye2024latent} learns discrete latent actions via a VQ-VAE latent action tokenizer.
    \item \textbf{UniVLA LAM}~\citep{bu2025univlalearningacttaskcentric} learns task-centric latent actions through language conditioning to extract task-relevant dynamics.
    \item \textbf{Villa-X LAM}~\citep{chen2025villax} learns physically grounded latent actions with an extra proprioceptive state FDM. 
\end{itemize}


\paragraph{Implementation Details}
We build AC-LAM on the Villa-X LAM architecture and add additive-composition (AC) regularization to structure the latent action space.
The pretraining follows Villa-X the same datasets including OpenX and large-scale human video corpora. 
For scene-wise AC sampling, we draw triples \((i,j,k)\) from the same trajectory, bound the temporal span across \(i,j,k\) by \(\tau\) (robot: \(3\,\mathrm{s}\); human: \(2\,\mathrm{s}\)), and filter triples with large rotations on robot data
to ensure additivity is a reasonable approximation.
Additional architectural, training, and dataset details are provided in Appendix \ref{appendix:LAM training} and \ref{appendix:LAM datasets}.

\subsection{Does AC-LAM learn well-structured latent actions?}



We assess whether AC-LAM learns latent actions with better structure along four axes: (i) adherence to additive composition, (ii) alignment between latent-norm and true motion magnitude, (iii) emergence of identity and inverse elements, (iv) suppression of non‑compositional leakage (environment‑specific and goal‑only terms). We perform quantitative/qualitative analyses on Fractal~~\citep{brohan2022rt1}~(in-distribution), Bridge-V2~\citep{walke2023bridgedata}~(in-distribution) and LIBERO~\citep{liu2023libero}~(out-of-distribution). More details can be found in Appendix~\ref{appendix:latent_action_structure}. 



\paragraph{Adherence to additive composition} We quantify adherence to the scene‑wise additive composition prior with a normalized composition residual
\begin{equation}
\mathcal{L}_{\mathrm{Norm\text{-}AC}}=\frac{\mathbb{E}_{i,j,k}\,\|z_{ik}-z_{ij}-z_{jk}\|_2^2}{\mathbb{E}_{i,j}\,\|{z_{ij}\|_2^2}}.
\label{eq:Norm-AC}
\end{equation}
This scale‑invariant metric measures how closely latent actions compose additively within a scene; lower values indicate stronger adherence
(since Eq~\eqref{eq:additive-env} suggests $z_{ik}-z_{ij}-z_{jk}=0$).
As shown in Table~1, AC‑LAM achieves lower $\mathcal{L}_{\mathrm{Norm\text{-}AC}}$ than baselines, indicating stronger adherence to additive composition. 

\paragraph{Alignment between latent-norm and true motion magnitude} 

To assess displacement calibration, we conduct both quantitative and qualitative evaluations.


Quantitatively, we compute the Pearson correlation between the latent‑action norm $\|z\|$ (using continuous post‑VQ embeddings) and the norm of the change in proprioceptive states $|\Delta s|$. 
\begin{equation}
r\big(\|z\|,\,\|\Delta s\|\big)
= \frac{\operatorname{cov}\!\big(\|z\|,\,\|\Delta s\|\big)}{\sigma_{\|z\|}\,\sigma_{\|\Delta s\|}}
\end{equation}
As shown in Table 1, 
AC‑LAM demonstrates strong alignment between latent magnitude and physical motion, with broadly favorable results across datasets.


Qualitatively, we visualize the trajectory of the latent action norm $\|f(o_0,o_t)\|$ over time on real‑world tabletop manipulation in Figure~\ref{fig:latent-norm-trajectory-full} . Despite intermediate fluctuations, AC‑LAM tracks displacement magnitude most faithfully.
In contrast, LAPA LAM and UniVLA LAM show weak correspondence to displacement, and villa‑X LAM, while more correlated due to the proprio FDM design, remains systematically under‑calibrated relative to AC‑LAM.
The norm thus offers an interpretable proxy for the ``amount of motion" from the initial observation.


\begin{table*}[t]
\centering
\small
\caption{
Comparison of structured latent metrics on Fractal, Bridge and LIBERO for AC-LAM vs baselines. Lower $\mathcal{L}_{\mathrm{Norm\text{-}AC}}$/$\mathcal{L}_{\mathrm{Norm\text{-}Identity}}$/$\Delta_{\mathrm{inv}}$ indicate stronger additive/identity/inverse consistency; higher $r\big(\|z\|,\,\|\Delta s\|\big)$ indicate better displacement calibration. AC-LAM generally improves latent structure across datasets.
}
\begin{tabular}{lcccccc}
\toprule
\textbf{Dataset} & \textbf{Method} & \textbf{$\mathcal{L}_{\mathrm{Norm\text{-}AC}}$$\downarrow$} & \textbf{$r\big(\|z\|,\,\|\Delta s\|\big)$$\uparrow$} & $\mathcal{L}_{\mathrm{Norm\text{-}Identity}}$$\downarrow$ & $\Delta_{\mathrm{inv}}$$\downarrow$\\
\midrule
\multirow{4}{*}{Fractal}
    & AC-LAM                    & \textbf{0.086}             & \textbf{0.489}             & \textbf{0.151}                    & \textbf{0.156}                  \\
    & \makecell[c]{Villa-X LAM}     & 0.374             & 0.289             & 0.358                   & 0.874                  \\
    & \makecell[c]{UniVLA LAM}      & 0.960             & 0.325             & 0.478                     & 1.536                    \\
    & \makecell[c]{LAPA LAM}        & 0.845             & 0.024             & 0.862                    & 1.773                    \\
\midrule
\multirow{4}{*}{Bridge}
    & AC-LAM                    & \textbf{0.135}             &  \textbf{0.408}            & \textbf{0.020}                     & \textbf{0.267}                   \\
    & \makecell[c]{Villa-X LAM}     & 0.386             &  0.225            & 0.326                   & 0.875                   \\
    & \makecell[c]{UniVLA LAM}      & 0.953             &  0.276            & 0.517                  & 1.465                \\
    & \makecell[c]{LAPA LAM}        & 0.875             & -0.074            & 0.914                & 1.787                 \\
\midrule
\multirow{4}{*}{LIBERO}
    & AC-LAM                    & \textbf{0.089}             & 0.277             & 1.338            & \textbf{0.206}                     \\
    & \makecell[c]{Villa-X LAM}     & 0.491             & 0.074             & \textbf{0.516}                     & 0.955              \\
    & \makecell[c]{UniVLA LAM}      & 1.005             & \textbf{0.647}             & 0.517                  & 1.471               \\
    & \makecell[c]{LAPA LAM}        & 0.905             & 0.180             & 0.940                  & 1.808                \\
\bottomrule
\label{table:analysis}
\end{tabular}
\end{table*}

\paragraph{Emergence of identity and inverse elements} 


The scene‑wise additive composition prior induces identity and inverse structure in the latent action space. We empirically quantify these properties using continuous post‑VQ latents. To test how accurately the identity is learned, we report:
\begin{equation}
\mathcal{L}_{\mathrm{Norm\text{-}Identity}}=\frac{\mathbb{E}_{i}\,\|z_{ii}\|}{\mathbb{E}_{i,j}\,\|z_{ij}\|}
\end{equation}
where values closer to zero indicate stronger emergence of the identity property (as in Proposition~\ref{prop:identity}).
We evaluate the normalized norm for the sum of inverse elements
\begin{equation}
\Delta_{\mathrm{inv}}
= \frac{\mathbb{E}_{i,j}\,\|z_{ij} + z_{ji}\|}{\mathbb{E}_{i,j}\,\|z_{ij}\|}
\label{eq:mag-ratio}
\end{equation}
where values closer to zero indicate stronger emergence of the inverse property (as in Proposition~\ref{prop:inverse}).

Across datasets, AC‑LAM generally trends closer to the identity and inverse ideals, suggesting a more structured and interpretable latent‑action space.


\paragraph{Suppression of non‑compositional leakage} 

The scene‑wise additive composition prior is intended to suppress non‑compositional signals (environment identifiers and future information leakage). To quantify environment‑specific leakage, we train an XGBoost classifier~\citep{chen2016xgboost} to predict the robot dataset ID from continuous post‑VQ latent actions, treating the ID as a proxy for environment features. 
Lower probe accuracy $\mathrm{Acc}_{\mathrm{env}}^{\mathrm{mlp}}$ indicates less environment information in the latents and, consequently, stronger cross‑environment generalization.



Quantifying future information leakage is more challenging because future goals legitimately correlate with motion semantics. We therefore do not measure leakage directly; instead, we assess it indirectly via proxies—primarily the additivity residual across distinct goals within the same scene (Eq.~\ref{eq:Norm-AC}), which quantifies practical adherence to additive composition. Consequently, future information leakage is evaluated through this composition‑adherence metric. 


Across these evaluations, AC‑LAM consistently yields lower environment‑ID probe accuracy~(Table \ref{tab:env_probe}) and stronger proxy signals~(Table \ref{table:analysis}) than baselines, indicating better suppression of non‑compositional leakage and improved generalization.

\begin{table}[t!]
  \caption{Evaluation results on probe accuracy~(\texttt{$\mathrm{Acc}_{\mathrm{env}}^{\mathrm{mlp}}$}) on four environments~(fractral, bridge, kuka, droid) with different LAMs. Lower probe accuracy indicates less environment leakage.
}
  \label{tab:env_probe}
  \centering
    \small
    \begin{tabular}{lcccc}
        \toprule
        \textbf{LAM} & \makecell{\small{AC-LAM}} & \makecell{\small{Villa-X}} & \makecell{\small{UniVLA}} & \makecell{\small{LAPA}} \\ 
        \midrule
        \texttt{$\mathrm{Acc}_{\mathrm{env}}^{\mathrm{mlp}}$$\downarrow$} & \textbf{50.2\%} & 63.0\% & 80.2\% & 52.0\% \\
        \bottomrule
  \end{tabular}
\end{table}

\begin{figure*}[htbp]
    \centering
    \includegraphics[width=0.75\textwidth]{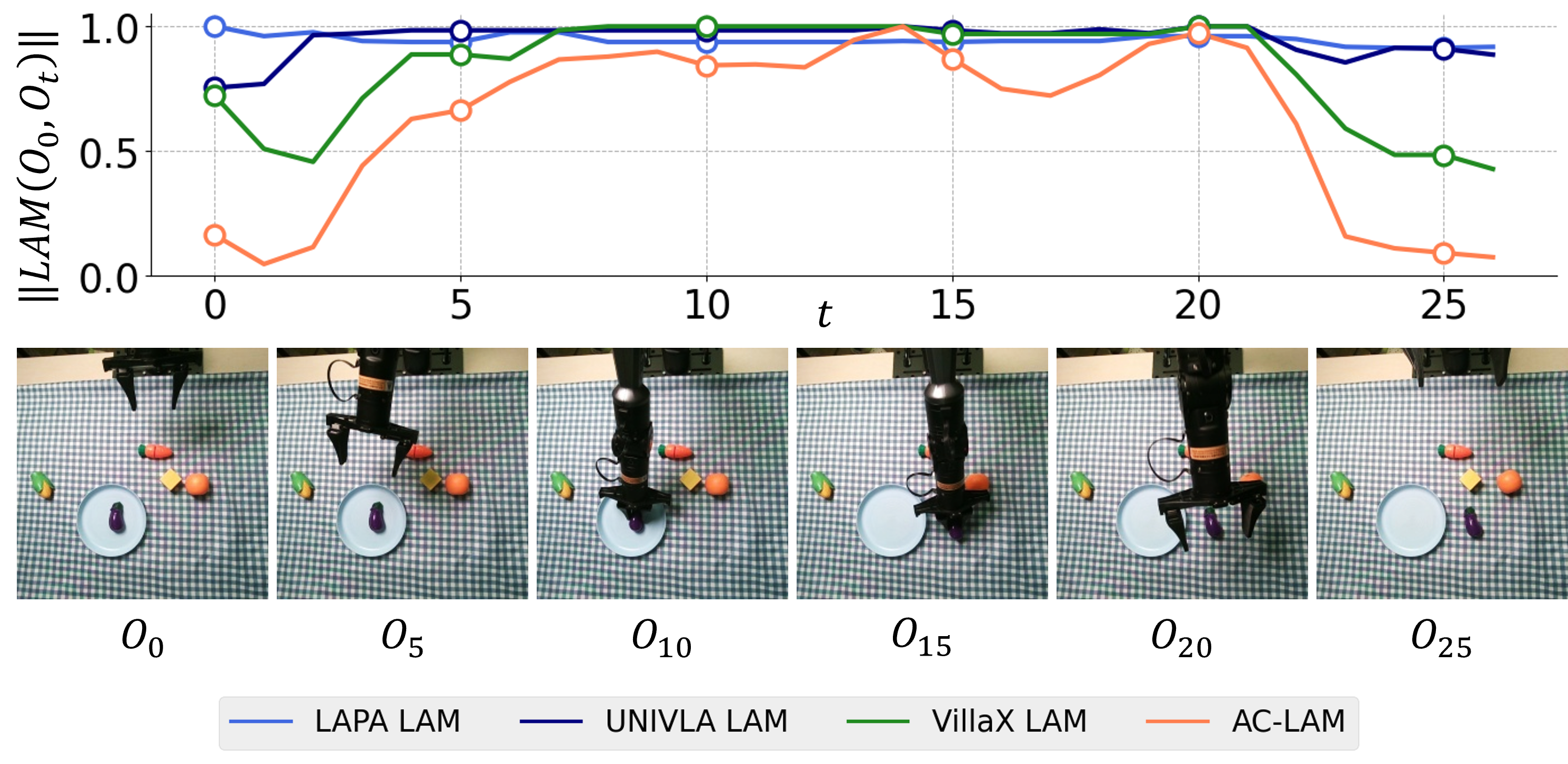}
    \caption{
    Trajectory of the latent action norm $||f(o_0,o_t)||$ in real-world tabletop manipulation, with latent actions generated by LAPA LAM, UniVLA LAM, Villa-X LAM and AC-LAM. AC‑LAM yields the most displacement‑calibrated latents, aligning with motion magnitude.
    }
    \label{fig:latent-norm-trajectory-full}
\end{figure*}

\subsection{Does AC-LAM improve downstream policy learning performance?}
We further evaluate AC-LAM's ability to provide effective supervision signals for downstream policy learning. 

\paragraph{Benchmarks}
We assess both vision–semantic generalization in simulation and accurate control on real-world tabletop manipulation~(see more details in Appendix~\ref{appendix:benchmarks}). 

\begin{itemize}
\item \textbf{Emoji Table-Top (GrinningFace)}~\citep{zhang2025vlaseffectivelyinheritvlms}: A diagnostic simulation benchmark for vision–semantic generalization in embodied control. Each episode uses the instruction “Pick the cube and place it on [desc.]”, where [desc.] is the language description of the target emoji. Three emoji cards are placed on the tabletop; success requires grasping the cube and placing it on the correct target card. 
We evaluate under three protocols supported by the benchmark: ID (in-distribution combinations and order), Train (novel combinations of training-set emojis), and Val (held-out validation emojis; out-of-distribution).
    \item \textbf{Real-World Tabletop Manipulation} A physical evaluation of accurate control and robustness using an AgileX Robotics Piper arm (7-DoF), focused on diverse pick tasks across varied objects and backgrounds. The dataset comprises 170 teleoperated trajectories collected under varied tabletop settings—including different tablecloth textures/colors, object layouts, and object positions—to increase scene diversity and support robustness evaluation. Performance is assessed under three regimes: in-distribution~(ID) scenes, out-of-distribution distractors (OOD-D, novel or repositioned non-target objects), and out-of-distribution backgrounds (OOD-B, changes to tabletop/background appearance).
\end{itemize}

\paragraph{Policy Training Setup}

We evaluate policies in the cross-dataset generalization setting, augmenting training data with Bridge-V2 to assess knowledge transfer.
We use LAM to derive latent-action labels $z$ forming robot tuples $(s, z, a)$. The policy follows the Villa-X architecture~\citep{chen2025villax} and is trained end-to-end with joint supervision from both latent actions and robot actions. 


\paragraph{Baselines}
We compare policies trained with different LAMs (Section~3.1) to assess each LAM's ability to provide supervision that enables downstream policy learning. We also include a baseline trained from scratch using only action-labeled data, based on the $\pi_0$~\citep{black2024pi0} architecture (i.e., without latent-action supervision). For fairness, we align architecture and dataset across settings: Villa-X extends $\pi_0$ with a latent-action decoder, so $\pi_0$~(or w/o LAM) serves as the corresponding variant without latent-action supervision, isolating the benefit of latent labels.

\begin{table*}[t!]
  \caption{Policy results on Emoji Table-Top (GrinningFace) across ID, Train, and Val splits.
  We report Success (S) (placement on the correct target), Any-Success (S/A) (placement on any target), and the Recognition Ratio (R=S/(S/A)), which approximates the model's target recognition accuracy.
}
  \label{tab:Emoji_Table-Top}
  \centering
    \small
    \begin{tabular}{lccccccccc}
        \toprule
        \multirow{2}{*}{\textbf{Method}} & \multicolumn{3}{c}{\textbf{ID}} & \multicolumn{3}{c}{\textbf{Train}} & \multicolumn{3}{c}{\textbf{Val}}  \\
        \cmidrule(lr){2-4}  \cmidrule(lr){5-7} \cmidrule(lr){8-10} 
         & S & S/A & R & S & S/A & R & S & S/A & R\\ 
        \midrule
        \texttt{w/o LAM} & 22 & 35 & 0.63 & 26 & 44 & 0.59 & 19 & 38 & 0.50 \\
        \midrule
        LAPA LAM & 19 & 33 & 0.58 & 7 & 26 & 0.27 & 9 & 34 & 0.26 \\
        UniVLA LAM & 38 & 55 & 0.69 & 29 & \textbf{58} & 0.5 & 28 & 57 & 0.49 \\
        Villa-X LAM & 42 & 53 & 0.79 & 25 & 52 & 0.48 & 31 & 58 & 0.53 \\
        \midrule
        AC-LAM & \textbf{55} & \textbf{61} & \textbf{0.90} & \textbf{42} & 56 & \textbf{0.75} & \textbf{41} & \textbf{60} & \textbf{0.68} \\
        \bottomrule
  \end{tabular}
\end{table*}

\begin{table}[t!]
  \caption{Evaluation results on Real-World Tabletop Manipulation for policy learning with different LAMs. 
}
  \label{tab:real_robot}
  \centering
    \small
    \begin{tabular}{lccc}
        \toprule
        \textbf{Method} & \makecell{\small{ID}} & \makecell{\small{OOD-D}} & \makecell{\small{OOD-B}} \\ 
        \midrule
        \texttt{w/o LAM} & 33.3 & 26.7 & 6.7 \\
        \midrule
        LAPA LAM & 13.3 & 6.7 & 0 \\
        UniVLA LAM & 33.3 & 26.7 & 26.7 \\
        Villa-X LAM & 40 & 20 & 26.7\\
        \midrule
        AC-LAM & \textbf{60} & \textbf{53.3} & \textbf{33.3} \\
        \bottomrule
  \end{tabular}
\end{table}



\paragraph{Results and Analysis}
Tables~\ref{tab:Emoji_Table-Top} (simulation) and~\ref{tab:real_robot} (real robot) report task success across protocols and tasks. Trends are consistent across simulation and real robot: adding scene-wise additive-composition (AC) constraints to latent action learning yields marked improvements over the no‑AC baseline in average task success, with gains observed on most tasks. On Emoji Table‑Top, AC‑LAM achieves higher success under ID/Train/Val. In Table~\ref{tab:Emoji_Table-Top}, we report S (grasp the cube and place it on the correct target card), S/A (grasp the cube and place it on any card), and $R = \frac{S}{S/A}$, which approximates target‑card recognition. AC‑LAM notably improves R, contributing to the overall success rate.
On the real‑robot suite, AC‑LAM consistently outperforms LAPA, UniVLA, villa‑X, and $\pi_0$~(or w/o LAM), indicating robust generalization and more effective supervision under distribution shift in real‑world robot settings. 
Under OOD background shifts, AC‑LAM typically preserves pick success while the w/o LAM baseline~($\pi_0$) and LAPA LAM fails to grasp consistently. We attribute these gains to motion‑specific, displacement‑calibrated latents induced by the additive‑composition prior, which provide stronger supervision in downstream policy learning.






\subsection{How do different design choices affect AC-LAM performance?}



We study design choices for instantiating the scene-wise additive composition prior, targeting (i) practical adherence to additive composition, (ii) improved calibration on motion magnitude, and (iii) greater training stability. We evaluate above structured-latent metrics and optimization stability, and conduct ablations on AC-LAM trained on a reduced dataset comprising Bridge-V2 and Sth-Sth-V2~\citep{goyal2017something}.


\paragraph{Design Factors}
Our default AC-LAM uses the FDM-form loss with constraints on post-VQ embeddings. We ablate:
\begin{itemize}
    \item \textbf{AC loss form}: $\mathcal{L}_{\mathrm{AC\text{-}IDM}}$ (additivity in inverse-dynamics latents; Eq.~\ref{eq:ac-idm-impl}) vs.\ $\mathcal{L}_{\mathrm{AC\text{-}FDM}}$ (decoding from summed latents; Eq.~\ref{eq:ac-fdm-impl}).
    \item \textbf{Placement in VQ-VAE}: pre-VQ (continuous encoder latents) vs.\ post-VQ (codebook embeddings).
    \item \textbf{Stop-gradient (sg) for $\mathcal{L}_{\mathrm{AC\text{-}IDM}}$}: none, sg on $z_{ik}$, or sg on $(z_{ij}+z_{jk})$.

\end{itemize}

\paragraph{Findings} 

Enforcing the scene-wise additive prior via decoder-side $\mathcal{L}_{\mathrm{AC\text{-}FDM}}$ is generally more stable than $\mathcal{L}_{\mathrm{AC\text{-}IDM}}$, as decoding from the sum of latents regularizes through the observation space and reduces optimization shocks. 
We observe a trade-off in where AC is applied: \emph{pre-VQ} improves displacement calibration ($r(\|z\|,\|\Delta s\|)=\mathbf{0.476}$) but weakens additive consistency ($\mathcal{L}_{\mathrm{Norm\text{-}AC}}=0.456$), while \emph{post-VQ} strengthens additive consistency ($\mathcal{L}_{\mathrm{Norm\text{-}AC}}=\mathbf{0.102}$) with stable training (and moderate $r=0.256$).
For $\mathcal{L}_{\mathrm{AC\text{-}IDM}}$, stop-gradient placement is critical: without stop-gradient the training tends to collapse; stopping gradients on $(z_{ij}+z_{jk})$ often drives the latent norm to blow up; stopping on $z_{ik}$ is the most stable of the IDM variants, yet it still underperforms $\mathcal{L}_{\mathrm{AC\text{-}FDM}}$.
In practice, both pre-VQ and post-VQ have merits; following prior LAM/policy setups that consume post-VQ codebook embeddings~\citep{chen2025villax}, we default to applying $\mathcal{L}_{\mathrm{AC\text{-}FDM}}$ on post-VQ continuous embeddings.

\begin{table}[t!]
  \caption{Ablation on different design choices for AC-LAM, evaluated on Bridge.
}
  \label{tab:ablation}
  \centering
    \small
    \begin{tabular}{lccc}
        \toprule
        \textbf{Design} & \textbf{$\mathcal{L}_{\mathrm{Norm\text{-}AC}}$$\downarrow$}  & \textbf{$r\big(\|z\|,\,\|\Delta s\|\big)$$\uparrow$}& Stability \\ 
        \midrule
        Default                     & \textbf{0.102} & 0.256            & stable    \\  \midrule
        pre-VQ                      & 0.456 & \textbf{0.476}   & stable    \\  \midrule
        IDM(no sg)                  & -          & -            & collapse  \\
        sg on $z_{ik}$         & 0.141          & 0.098            & stable    \\
        sg on $z_{ij}+z_{jk}$  & -              & -                & explode   \\
        \bottomrule
  \end{tabular}
\end{table}





\section{Conclusion and Future Work}



We introduced AC-LAM, a latent action learning framework that imposes a scene-wise additive-composition prior ($ z_{ik} = z_{ij} + z_{jk}$) aligned with short-horizon motion semantics. Our analysis and experiments show that AC-LAM yields more interpretable, motion-specific, and displacement-calibrated latents, suppresses non-compositional leakage, and provides stronger supervision for downstream policy learning. These results establish a practical and principled foundation for structured latent action learning from video.

Directions for future work include stronger scene identification to form cross-trajectory triples—via scene labels or unsupervised scene recognition/clustering. 
We have demonstrated that additive composition is a promising direction in latent action learning; advancing scene conditioning might further align structured latents with physical motion priors.

\section{Impact Statement}
This paper presents work whose goal is to advance the field of machine learning. There are many potential societal consequences of our work, none of which we feel must be specifically highlighted here.

\bibliography{latent_action}

@article{schmidt2023learning,
  title={Learning to act without actions},
  author={Schmidt, Dominik and Jiang, Minqi},
  journal={arXiv preprint arXiv:2312.10812},
  year={2023}
}

@inproceedings{bruce2024genie,
  title={Genie: Generative interactive environments},
  author={Bruce, Jake and Dennis, Michael D and Edwards, Ashley and Parker-Holder, Jack and Shi, Yuge and Hughes, Edward and Lai, Matthew and Mavalankar, Aditi and Steigerwald, Richie and Apps, Chris and others},
  booktitle={Forty-first International Conference on Machine Learning},
  year={2024}
}

@article{ye2024latent,
  title   = {Latent Action Pretraining from Videos},
  author  = {Seonghyeon Ye and Joel Jang and Byeongguk Jeon and Sejune Joo and Jianwei Yang and Baolin Peng and Ajay Mandlekar and Reuben Tan and Yu-Wei Chao and Bill Yuchen Lin and Lars Liden and Kimin Lee and Jianfeng Gao and Luke Zettlemoyer and Dieter Fox and Minjoon Seo},
  year    = {2024},
  journal = {arXiv preprint arXiv: 2410.11758}
}

@article{chen2024igor,
  title={IGOR: Image-GOal Representations are the Atomic Control Units for Foundation Models in Embodied AI},
  author={Chen, Xiaoyu and Guo, Junliang and He, Tianyu and Zhang, Chuheng and Zhang, Pushi and Yang, Derek Cathera and Zhao, Li and Bian, Jiang},
  journal={arXiv preprint arXiv:2411.00785},
  year={2024}
}

@misc{bu2025univlalearningacttaskcentric,
      title={UniVLA: Learning to Act Anywhere with Task-centric Latent Actions}, 
      author={Qingwen Bu and Yanting Yang and Jisong Cai and Shenyuan Gao and Guanghui Ren and Maoqing Yao and Ping Luo and Hongyang Li},
      year={2025},
      eprint={2505.06111},
      archivePrefix={arXiv},
      primaryClass={cs.RO},
      url={https://arxiv.org/abs/2505.06111}, 
}

@inproceedings{walke2023bridgedata,
    title={BridgeData V2: A Dataset for Robot Learning at Scale},
    author={Walke, Homer and Black, Kevin and Lee, Abraham and Kim, Moo Jin and Du, Max and Zheng, Chongyi and Zhao, Tony and Hansen-Estruch, Philippe and Vuong, Quan and He, Andre and Myers, Vivek and Fang, Kuan and Finn, Chelsea and Levine, Sergey},
    booktitle={Conference on Robot Learning (CoRL)},
    year={2023}
}

@article{chen2024moto,
  title   = {Moto: Latent Motion Token as the Bridging Language for Robot Manipulation},
  author  = {Yi Chen and Yuying Ge and Yizhuo Li and Yixiao Ge and Mingyu Ding and Ying Shan and Xihui Liu},
  year    = {2024},
  journal = {arXiv preprint arXiv: 2412.04445}
}

@article{black2024pi0,
  title   = {$\pi_0$: A Vision-Language-Action Flow Model for General Robot Control},
  author  = {Kevin Black and Noah Brown and Danny Driess and Adnan Esmail and Michael Equi and Chelsea Finn and Niccolo Fusai and Lachy Groom and Karol Hausman and Brian Ichter and Szymon Jakubczak and Tim Jones and Liyiming Ke and Sergey Levine and Adrian Li-Bell and Mohith Mothukuri and Suraj Nair and Karl Pertsch and Lucy Xiaoyang Shi and James Tanner and Quan Vuong and Anna Walling and Haohuan Wang and Ury Zhilinsky},
  year    = {2024},
  journal = {arXiv preprint arXiv: 2410.24164}
}

@article{agibot-world-contributors2025agibot,
  title   = {AgiBot World Colosseo: A Large-scale Manipulation Platform for Scalable and Intelligent Embodied Systems},
  author  = {Qingwen Bu and Jisong Cai and Li Chen and Xiuqi Cui and Yan Ding and Siyuan Feng and Shenyuan Gao and Xindong He and Xu Huang and Shu Jiang and Yuxin Jiang and Cheng Jing and Hongyang Li and Jialu Li and Chiming Liu and Yi Liu and Yuxiang Lu and Jianlan Luo and Ping Luo and Yao Mu and Yuehan Niu and Yixuan Pan and Jiangmiao Pang and Yu Qiao and Guanghui Ren and Cheng Ruan and Jiaqi Shan and Yongjian Shen and Chengshi Shi and Mingkang Shi and Modi Shi and Chonghao Sima and Jianheng Song and Huijie Wang and Wenhao Wang and Dafeng Wei and Chengen Xie and Guo Xu and Junchi Yan and Cunbiao Yang and Lei Yang and Shukai Yang and Maoqing Yao and Jia Zeng and Chi Zhang and Qinglin Zhang and Bin Zhao and Chengyue Zhao and Jiaqi Zhao and Jianchao Zhu, AgiBot-World-Contributors},
  year    = {2025},
  journal = {arXiv preprint arXiv: 2503.06669}
}

@article{liu2023libero,
  title={LIBERO: Benchmarking Knowledge Transfer for Lifelong Robot Learning},
  author={Liu, Bo and Zhu, Yifeng and Gao, Chongkai and Feng, Yihao and Liu, Qiang and Zhu, Yuke and Stone, Peter},
  journal={arXiv preprint arXiv:2306.03310},
  year={2023}
}

@misc{open_x_embodiment_rt_x_2023,
title={Open {X-E}mbodiment: Robotic Learning Datasets and {RT-X} Models},
author = {Open X-Embodiment Collaboration and Abby O'Neill and Abdul Rehman and Abhiram Maddukuri and Abhishek Gupta and Abhishek Padalkar and Abraham Lee and Acorn Pooley and Agrim Gupta and Ajay Mandlekar and Ajinkya Jain and Albert Tung and Alex Bewley and Alex Herzog and Alex Irpan and Alexander Khazatsky and Anant Rai and Anchit Gupta and Andrew Wang and Andrey Kolobov and Anikait Singh and Animesh Garg and Aniruddha Kembhavi and Annie Xie and Anthony Brohan and Antonin Raffin and Archit Sharma and Arefeh Yavary and Arhan Jain and Ashwin Balakrishna and Ayzaan Wahid and Ben Burgess-Limerick and Beomjoon Kim and Bernhard Schölkopf and Blake Wulfe and Brian Ichter and Cewu Lu and Charles Xu and Charlotte Le and Chelsea Finn and Chen Wang and Chenfeng Xu and Cheng Chi and Chenguang Huang and Christine Chan and Christopher Agia and Chuer Pan and Chuyuan Fu and Coline Devin and Danfei Xu and Daniel Morton and Danny Driess and Daphne Chen and Deepak Pathak and Dhruv Shah and Dieter Büchler and Dinesh Jayaraman and Dmitry Kalashnikov and Dorsa Sadigh and Edward Johns and Ethan Foster and Fangchen Liu and Federico Ceola and Fei Xia and Feiyu Zhao and Felipe Vieira Frujeri and Freek Stulp and Gaoyue Zhou and Gaurav S. Sukhatme and Gautam Salhotra and Ge Yan and Gilbert Feng and Giulio Schiavi and Glen Berseth and Gregory Kahn and Guangwen Yang and Guanzhi Wang and Hao Su and Hao-Shu Fang and Haochen Shi and Henghui Bao and Heni Ben Amor and Henrik I Christensen and Hiroki Furuta and Homer Walke and Hongjie Fang and Huy Ha and Igor Mordatch and Ilija Radosavovic and Isabel Leal and Jacky Liang and Jad Abou-Chakra and Jaehyung Kim and Jaimyn Drake and Jan Peters and Jan Schneider and Jasmine Hsu and Jeannette Bohg and Jeffrey Bingham and Jeffrey Wu and Jensen Gao and Jiaheng Hu and Jiajun Wu and Jialin Wu and Jiankai Sun and Jianlan Luo and Jiayuan Gu and Jie Tan and Jihoon Oh and Jimmy Wu and Jingpei Lu and Jingyun Yang and Jitendra Malik and João Silvério and Joey Hejna and Jonathan Booher and Jonathan Tompson and Jonathan Yang and Jordi Salvador and Joseph J. Lim and Junhyek Han and Kaiyuan Wang and Kanishka Rao and Karl Pertsch and Karol Hausman and Keegan Go and Keerthana Gopalakrishnan and Ken Goldberg and Kendra Byrne and Kenneth Oslund and Kento Kawaharazuka and Kevin Black and Kevin Lin and Kevin Zhang and Kiana Ehsani and Kiran Lekkala and Kirsty Ellis and Krishan Rana and Krishnan Srinivasan and Kuan Fang and Kunal Pratap Singh and Kuo-Hao Zeng and Kyle Hatch and Kyle Hsu and Laurent Itti and Lawrence Yunliang Chen and Lerrel Pinto and Li Fei-Fei and Liam Tan and Linxi "Jim" Fan and Lionel Ott and Lisa Lee and Luca Weihs and Magnum Chen and Marion Lepert and Marius Memmel and Masayoshi Tomizuka and Masha Itkina and Mateo Guaman Castro and Max Spero and Maximilian Du and Michael Ahn and Michael C. Yip and Mingtong Zhang and Mingyu Ding and Minho Heo and Mohan Kumar Srirama and Mohit Sharma and Moo Jin Kim and Naoaki Kanazawa and Nicklas Hansen and Nicolas Heess and Nikhil J Joshi and Niko Suenderhauf and Ning Liu and Norman Di Palo and Nur Muhammad Mahi Shafiullah and Oier Mees and Oliver Kroemer and Osbert Bastani and Pannag R Sanketi and Patrick "Tree" Miller and Patrick Yin and Paul Wohlhart and Peng Xu and Peter David Fagan and Peter Mitrano and Pierre Sermanet and Pieter Abbeel and Priya Sundaresan and Qiuyu Chen and Quan Vuong and Rafael Rafailov and Ran Tian and Ria Doshi and Roberto Mart{'i}n-Mart{'i}n and Rohan Baijal and Rosario Scalise and Rose Hendrix and Roy Lin and Runjia Qian and Ruohan Zhang and Russell Mendonca and Rutav Shah and Ryan Hoque and Ryan Julian and Samuel Bustamante and Sean Kirmani and Sergey Levine and Shan Lin and Sherry Moore and Shikhar Bahl and Shivin Dass and Shubham Sonawani and Shuran Song and Sichun Xu and Siddhant Haldar and Siddharth Karamcheti and Simeon Adebola and Simon Guist and Soroush Nasiriany and Stefan Schaal and Stefan Welker and Stephen Tian and Subramanian Ramamoorthy and Sudeep Dasari and Suneel Belkhale and Sungjae Park and Suraj Nair and Suvir Mirchandani and Takayuki Osa and Tanmay Gupta and Tatsuya Harada and Tatsuya Matsushima and Ted Xiao and Thomas Kollar and Tianhe Yu and Tianli Ding and Todor Davchev and Tony Z. Zhao and Travis Armstrong and Trevor Darrell and Trinity Chung and Vidhi Jain and Vincent Vanhoucke and Wei Zhan and Wenxuan Zhou and Wolfram Burgard and Xi Chen and Xiangyu Chen and Xiaolong Wang and Xinghao Zhu and Xinyang Geng and Xiyuan Liu and Xu Liangwei and Xuanlin Li and Yansong Pang and Yao Lu and Yecheng Jason Ma and Yejin Kim and Yevgen Chebotar and Yifan Zhou and Yifeng Zhu and Yilin Wu and Ying Xu and Yixuan Wang and Yonatan Bisk and Yongqiang Dou and Yoonyoung Cho and Youngwoon Lee and Yuchen Cui and Yue Cao and Yueh-Hua Wu and Yujin Tang and Yuke Zhu and Yunchu Zhang and Yunfan Jiang and Yunshuang Li and Yunzhu Li and Yusuke Iwasawa and Yutaka Matsuo and Zehan Ma and Zhuo Xu and Zichen Jeff Cui and Zichen Zhang and Zipeng Fu and Zipeng Lin},
howpublished  = {\url{https://arxiv.org/abs/2310.08864}},
year = {2023},
}

@inproceedings{grauman2022ego4d,
  title={Ego4d: Around the world in 3,000 hours of egocentric video},
  author={Grauman, Kristen and Westbury, Andrew and Byrne, Eugene and Chavis, Zachary and Furnari, Antonino and Girdhar, Rohit and Hamburger, Jackson and Jiang, Hao and Liu, Miao and Liu, Xingyu and others},
  booktitle={Proceedings of the IEEE/CVF Conference on Computer Vision and Pattern Recognition},
  pages={18995--19012},
  year={2022}
}

@article{damen2020epic,
  title={The epic-kitchens dataset: Collection, challenges and baselines},
  author={Damen, Dima and Doughty, Hazel and Farinella, Giovanni Maria and Fidler, Sanja and Furnari, Antonino and Kazakos, Evangelos and Moltisanti, Davide and Munro, Jonathan and Perrett, Toby and Price, Will and others},
  journal={IEEE Transactions on Pattern Analysis and Machine Intelligence},
  volume={43},
  number={11},
  pages={4125--4141},
  year={2020},
  publisher={IEEE}
}

@inproceedings{li2018eye,
  title={In the eye of beholder: Joint learning of gaze and actions in first person video},
  author={Li, Yin and Liu, Miao and Rehg, James M},
  booktitle={Proceedings of the European conference on computer vision (ECCV)},
  pages={619--635},
  year={2018}
}

@article{kim2024openvla,
  title={OpenVLA: An Open-Source Vision-Language-Action Model},
  author={Kim, Moo Jin and Pertsch, Karl and Karamcheti, Siddharth and Xiao, Ted and Balakrishna, Ashwin and Nair, Suraj and Rafailov, Rafael and Foster, Ethan and Lam, Grace and Sanketi, Pannag and others},
  journal={arXiv preprint arXiv:2406.09246},
  year={2024}
}

@article{khazatsky2024droid,
    title   = {DROID: A Large-Scale In-The-Wild Robot Manipulation Dataset},
    author  = {Alexander Khazatsky and Karl Pertsch and Suraj Nair and Ashwin Balakrishna and Sudeep Dasari and Siddharth Karamcheti and Soroush Nasiriany and Mohan Kumar Srirama and Lawrence Yunliang Chen and Kirsty Ellis and Peter David Fagan and Joey Hejna and Masha Itkina and Marion Lepert and Yecheng Jason Ma and Patrick Tree Miller and Jimmy Wu and Suneel Belkhale and Shivin Dass and Huy Ha and Arhan Jain and Abraham Lee and Youngwoon Lee and Marius Memmel and Sungjae Park and Ilija Radosavovic and Kaiyuan Wang and Albert Zhan and Kevin Black and Cheng Chi and Kyle Beltran Hatch and Shan Lin and Jingpei Lu and Jean Mercat and Abdul Rehman and Pannag R Sanketi and Archit Sharma and Cody Simpson and Quan Vuong and Homer Rich Walke and Blake Wulfe and Ted Xiao and Jonathan Heewon Yang and Arefeh Yavary and Tony Z. Zhao and Christopher Agia and Rohan Baijal and Mateo Guaman Castro and Daphne Chen and Qiuyu Chen and Trinity Chung and Jaimyn Drake and Ethan Paul Foster and Jensen Gao and David Antonio Herrera and Minho Heo and Kyle Hsu and Jiaheng Hu and Donovon Jackson and Charlotte Le and Yunshuang Li and Kevin Lin and Roy Lin and Zehan Ma and Abhiram Maddukuri and Suvir Mirchandani and Daniel Morton and Tony Nguyen and Abigail O'Neill and Rosario Scalise and Derick Seale and Victor Son and Stephen Tian and Emi Tran and Andrew E. Wang and Yilin Wu and Annie Xie and Jingyun Yang and Patrick Yin and Yunchu Zhang and Osbert Bastani and Glen Berseth and Jeannette Bohg and Ken Goldberg and Abhinav Gupta and Abhishek Gupta and Dinesh Jayaraman and Joseph J Lim and Jitendra Malik and Roberto Martín-Martín and Subramanian Ramamoorthy and Dorsa Sadigh and Shuran Song and Jiajun Wu and Michael C. Yip and Yuke Zhu and Thomas Kollar and Sergey Levine and Chelsea Finn},
    year    = {2024},
}

@article{luo2024fmb,
  title={FMB: a Functional Manipulation Benchmark for Generalizable Robotic Learning},
  author={Luo, Jianlan and Xu, Charles and Liu, Fangchen and Tan, Liam and Lin, Zipeng and Wu, Jeffrey and Abbeel, Pieter and Levine, Sergey},
  journal={arXiv preprint arXiv:2401.08553},
  year={2024}
}

@misc{shafiullah2023dobbe,
    title={On Bringing Robots Home}, 
    author={Nur Muhammad Mahi Shafiullah and Anant Rai and Haritheja Etukuru and Yiqian Liu and Ishan Misra and Soumith Chintala and Lerrel Pinto},
    year={2023},
    eprint={2311.16098},
    archivePrefix={arXiv},
    primaryClass={cs.RO}
}

@inproceedings{jang2022bc,
  title={Bc-z: Zero-shot task generalization with robotic imitation learning},
  author={Jang, Eric and Irpan, Alex and Khansari, Mohi and Kappler, Daniel and Ebert, Frederik and Lynch, Corey and Levine, Sergey and Finn, Chelsea},
  booktitle={Conference on Robot Learning},
  pages={991--1002},
  year={2022},
  organization={PMLR}
}

@article{mendonca2023structured,
  title={Structured World Models from Human Videos},
  author={Mendonca, Russell and Bahl, Shikhar and Pathak, Deepak},
  journal={CoRL},
  year={2023}
}

@inproceedings{quere_shared_2020,
        address = {Paris, France},
        title = {Shared {Control} {Templates} for {Assistive} {Robotics}},
        language = {en},
        booktitle = {2020 {IEEE} {International} {Conference} on {Robotics} and {Automation} ({ICRA})},
        author = {Quere, Gabriel and Hagengruber, Annette and Iskandar, Maged and Bustamante, Samuel and Leidner, Daniel and Stulp, Freek and Vogel, Joern},
        year = {2020},
        pages = {7},
}

@inproceedings{liu2022robot,
    title = {Robot Learning on the Job: Human-in-the-Loop Autonomy and Learning During Deployment},
    author = {Huihan Liu and Soroush Nasiriany and Lance Zhang and Zhiyao Bao and Yuke Zhu},
    booktitle = {Robotics: Science and Systems (RSS)},
    year = {2023}
}

@inproceedings{nasiriany2022sailor,
      title={Learning and Retrieval from Prior Data for Skill-based Imitation Learning},
      author={Soroush Nasiriany and Tian Gao and Ajay Mandlekar and Yuke Zhu},
      booktitle={Conference on Robot Learning (CoRL)},
      year={2022}
    }

@inproceedings{heo2023furniturebench,
  title={FurnitureBench: Reproducible Real-World Benchmark for Long-Horizon Complex Manipulation},
  author={Minho Heo and Youngwoon Lee and Doohyun Lee and Joseph J. Lim},
  booktitle={Robotics: Science and Systems},
  year={2023}
}

@article{cui2022play,
  title   = {From Play to Policy: Conditional Behavior Generation from Uncurated Robot Data},
  author  = {Cui, Zichen Jeff and Wang, Yibin and Shafiullah, Nur Muhammad Mahi and Pinto, Lerrel},
  journal = {arXiv preprint arXiv:2210.10047},
  year    = {2022}
}

@article{belkhale2023hydra,
 title={HYDRA: Hybrid Robot Actions for Imitation Learning},
 author={Belkhale, Suneel and Cui, Yuchen and Sadigh, Dorsa},
 journal={arxiv},
 year={2023}
}

@article{lynch2023interactive,
  title={Interactive language: Talking to robots in real time},
  author={Lynch, Corey and Wahid, Ayzaan and Tompson, Jonathan and Ding, Tianli and Betker, James and Baruch, Robert and Armstrong, Travis and Florence, Pete},
  journal={IEEE Robotics and Automation Letters},
  year={2023},
  publisher={IEEE}
}

@misc{BerkeleyUR5Website,
  title = {Berkeley {UR5} Demonstration Dataset},
  author = {Lawrence Yunliang Chen and Simeon Adebola and Ken Goldberg},
  howpublished = {\url{https://sites.google.com/view/berkeley-ur5/home}},
}

@misc{dass2023jacoplay,
  author = {Dass, Shivin and Yapeter, Jullian and Zhang, Jesse and Zhang, Jiahui
            and Pertsch, Karl and Nikolaidis, Stefanos and Lim, Joseph J.},
  title = {{CLVR} Jaco Play Dataset},
  url = {https://github.com/clvrai/clvr_jaco_play_dataset},
  version = {1.0.0},
  year = {2023}
}

@inproceedings{rosetebeas2022latent,
author = {Erick Rosete-Beas and Oier Mees and Gabriel Kalweit and Joschka Boedecker and Wolfram Burgard},
title = {Latent Plans for Task Agnostic Offline Reinforcement Learning},
booktitle = {Proceedings of the 6th Conference on Robot Learning (CoRL)},
year = {2022}
}

@inproceedings{mees2023grounding,
  title={Grounding  Language  with  Visual  Affordances  over  Unstructured  Data},
  author={Oier Mees and Jessica Borja-Diaz and Wolfram Burgard},
  booktitle = {Proceedings of the IEEE International Conference on Robotics and Automation (ICRA)},
  year={2023},
  address = {London, UK}
}

@article{ebert2021bridge,
  title={Bridge data: Boosting generalization of robotic skills with cross-domain datasets},
  author={Ebert, Frederik and Yang, Yanlai and Schmeckpeper, Karl and Bucher, Bernadette and Georgakis, Georgios and Daniilidis, Kostas and Finn, Chelsea and Levine, Sergey},
  journal={arXiv preprint arXiv:2109.13396},
  year={2021}
}

@inproceedings{kalashnikov2018scalable,
  title={Qt-opt: Scalable deep reinforcement learning for vision-based robotic manipulation},
  author={Kalashnikov, Dmitry and Irpan, Alex and Pastor, Peter and Ibarz, Julian and Herzog, Alexander and Jang, Eric and Quillen, Deirdre and Holly, Ethan and Kalakrishnan, Mrinal and Vanhoucke, Vincent and others},
  booktitle={CoRL},
  pages={651--673},
  year={2018}
}

@article{brohan2022rt1,
  title     = {RT-1: Robotics Transformer for Real-World Control at Scale},
  author    = {Anthony Brohan and Noah Brown and Justice Carbajal and Yevgen Chebotar and Joseph Dabis and Chelsea Finn and K. Gopalakrishnan and Karol Hausman and Alexander Herzog and Jasmine Hsu and Julian Ibarz and Brian Ichter and A. Irpan and Tomas Jackson and Sally Jesmonth and Nikhil J. Joshi and Ryan C. Julian and Dmitry Kalashnikov and Yuheng Kuang and Isabel Leal and Kuang-Huei Lee and S. Levine and Yao Lu and U. Malla and D. Manjunath and Igor Mordatch and Ofir Nachum and Carolina Parada and Jodilyn Peralta and Emily Perez and Karl Pertsch and Jornell Quiambao and Kanishka Rao and M. Ryoo and Grecia Salazar and Pannag R. Sanketi and Kevin Sayed and Jaspiar Singh and S. Sontakke and Austin Stone and Clayton Tan and Huong Tran and Vincent Vanhoucke and Steve Vega and Q. Vuong and F. Xia and Ted Xiao and Peng Xu and Sichun Xu and Tianhe Yu and Brianna Zitkovich},
  journal   = {Robotics: Science and Systems},
  year      = {2022},
  doi       = {10.48550/arXiv.2212.06817},
  bibSource = {Semantic Scholar https://www.semanticscholar.org/paper/fd1cf28a2b8caf2fe29af5e7fa9191cecfedf84d}
}

@InProceedings{goyal2017something,
author = {Goyal, Raghav and Ebrahimi Kahou, Samira and Michalski, Vincent and Materzynska, Joanna and Westphal, Susanne and Kim, Heuna and Haenel, Valentin and Fruend, Ingo and Yianilos, Peter and Mueller-Freitag, Moritz and Hoppe, Florian and Thurau, Christian and Bax, Ingo and Memisevic, Roland},
title = {The "Something Something" Video Database for Learning and Evaluating Visual Common Sense},
booktitle = {Proceedings of the IEEE International Conference on Computer Vision (ICCV)},
month = {Oct},
year = {2017}
}

@inproceedings{octo_2023,
    title={Octo: An Open-Source Generalist Robot Policy},
    author = {{Octo Model Team} and Dibya Ghosh and Homer Walke and Karl Pertsch and Kevin Black and Oier Mees and Sudeep Dasari and Joey Hejna and Charles Xu and Jianlan Luo and Tobias Kreiman and {You Liang} Tan and Lawrence Yunliang Chen and Pannag Sanketi and Quan Vuong and Ted Xiao and Dorsa Sadigh and Chelsea Finn and Sergey Levine},
    booktitle = {Proceedings of Robotics: Science and Systems},
    address  = {Delft, Netherlands},
    year = {2024},
}

@InProceedings{Li_2022_CVPR,
      title = {Egocentric Prediction of Action Target in 3D},
      author = {Li, Yiming and Cao, Ziang and Liang, Andrew and Liang, Benjamin and Chen, Luoyao and Zhao, Hang and Feng, Chen},
      booktitle = {Proceedings of the IEEE/CVF Conference on Computer Vision and Pattern Recognition (CVPR)},
      month = {June},
      year = {2022}
}

@misc{wang2024hocapcapturedataset3d,
      title={HO-Cap: A Capture System and Dataset for 3D Reconstruction and Pose Tracking of Hand-Object Interaction},
      author={Jikai Wang and Qifan Zhang and Yu-Wei Chao and Bowen Wen and Xiaohu Guo and Yu Xiang},
      year={2024},
      eprint={2406.06843},
      archivePrefix={arXiv},
      primaryClass={cs.CV},
      url={https://arxiv.org/abs/2406.06843},
}

@InProceedings{Liu_2022_CVPR,
    author    = {Liu, Yunze and Liu, Yun and Jiang, Che and Lyu, Kangbo and Wan, Weikang and Shen, Hao and Liang, Boqiang and Fu, Zhoujie and Wang, He and Yi, Li},
    title     = {HOI4D: A 4D Egocentric Dataset for Category-Level Human-Object Interaction},
    booktitle = {Proceedings of the IEEE/CVF Conference on Computer Vision and Pattern Recognition (CVPR)},
    month     = {June},
    year      = {2022},
    pages     = {21013-21022}
}

@InProceedings{HoloAssist2023,
    author    = {Wang, Xin and Kwon, Taein and Rad, Mahdi and Pan, Bowen and Chakraborty, Ishani and Andrist, Sean and Bohus, Dan and Feniello, Ashley and Tekin, Bugra and Frujeri, Felipe Vieira and Joshi, Neel and Pollefeys, Marc},
    title     = {HoloAssist: an Egocentric Human Interaction Dataset for Interactive AI Assistants in the Real World},
    booktitle = {Proceedings of the IEEE/CVF International Conference on Computer Vision (ICCV)},
    month     = {October},
    year      = {2023},
    pages     = {20270-20281}
}

@inproceedings{
    fang2023rh20t,
    title = {RH20T: A Robotic Dataset for Learning Diverse Skills in One-Shot},
    author = {Fang, Hao-Shu and Fang, Hongjie and Tang, Zhenyu and Liu, Jirong and Wang, Junbo and Zhu, Haoyi and Lu, Cewu},
    booktitle = {RSS 2023 Workshop on Learning for Task and Motion Planning},
    year = {2023}
}

@misc{pei2025modeling,
      title={Modeling Fine-Grained Hand-Object Dynamics for Egocentric Video Representation Learning}, 
      author={Baoqi Pei and Yifei Huang and Jilan Xu and Guo Chen and Yuping He and Lijin Yang and Yali Wang and Weidi Xie and Yu Qiao and Fei Wu and Limin Wang},
      year={2025},
      eprint={2503.00986},
      archivePrefix={arXiv},
      primaryClass={cs.CV},
      url={https://arxiv.org/abs/2503.00986}, 
}

@misc{mikolov2013distributed,
      title={Distributed Representations of Words and Phrases and their Compositionality}, 
      author={Tomas Mikolov and Ilya Sutskever and Kai Chen and Greg Corrado and Jeffrey Dean},
      year={2013},
      eprint={1310.4546},
      archivePrefix={arXiv},
      primaryClass={cs.CL},
      url={https://arxiv.org/abs/1310.4546}, 
}

@misc{mikolov2013efficient,
      title={Efficient Estimation of Word Representations in Vector Space}, 
      author={Tomas Mikolov and Kai Chen and Greg Corrado and Jeffrey Dean},
      year={2013},
      eprint={1301.3781},
      archivePrefix={arXiv},
      primaryClass={cs.CL},
      url={https://arxiv.org/abs/1301.3781}, 
}

@misc{wieting2015paraphrase,
      title={From Paraphrase Database to Compositional Paraphrase Model and Back}, 
      author={John Wieting and Mohit Bansal and Kevin Gimpel and Karen Livescu and Dan Roth},
      year={2015},
      eprint={1506.03487},
      archivePrefix={arXiv},
      primaryClass={cs.CL},
      url={https://arxiv.org/abs/1506.03487}, 
}

@inproceedings{AroraLM17,
  author       = {Sanjeev Arora and
                  Yingyu Liang and
                  Tengyu Ma},
  title        = {A Simple but Tough-to-Beat Baseline for Sentence Embeddings},
  booktitle    = {5th International Conference on Learning Representations, {ICLR} 2017,
                  Toulon, France, April 24-26, 2017, Conference Track Proceedings},
  publisher    = {OpenReview.net},
  year         = {2017},
  url          = {https://openreview.net/forum?id=SyK00v5xx},
  bibsource    = {dblp computer science bibliography, https://dblp.org}
}

@misc{trager2024linear,
      title={Linear Spaces of Meanings: Compositional Structures in Vision-Language Models}, 
      author={Matthew Trager and Pramuditha Perera and Luca Zancato and Alessandro Achille and Parminder Bhatia and Stefano Soatto},
      year={2024},
      eprint={2302.14383},
      archivePrefix={arXiv},
      primaryClass={cs.LG},
      url={https://arxiv.org/abs/2302.14383}, 
}

@misc{berasi2025text,
      title={Not Only Text: Exploring Compositionality of Visual Representations in Vision-Language Models}, 
      author={Davide Berasi and Matteo Farina and Massimiliano Mancini and Elisa Ricci and Nicola Strisciuglio},
      year={2025},
      eprint={2503.17142},
      archivePrefix={arXiv},
      primaryClass={cs.CV},
      url={https://arxiv.org/abs/2503.17142}, 
}

@misc{gao2025adaworld,
      title={AdaWorld: Learning Adaptable World Models with Latent Actions}, 
      author={Shenyuan Gao and Siyuan Zhou and Yilun Du and Jun Zhang and Chuang Gan},
      year={2025},
      eprint={2503.18938},
      archivePrefix={arXiv},
      primaryClass={cs.AI},
      url={https://arxiv.org/abs/2503.18938}, 
}

@misc{nikulin2025latent,
      title={Latent Action Learning Requires Supervision in the Presence of Distractors}, 
      author={Alexander Nikulin and Ilya Zisman and Denis Tarasov and Nikita Lyubaykin and Andrei Polubarov and Igor Kiselev and Vladislav Kurenkov},
      year={2025},
      eprint={2502.00379},
      archivePrefix={arXiv},
      primaryClass={cs.CV},
      url={https://arxiv.org/abs/2502.00379}, 
}

@inproceedings{zhang2025what,
author = {Zhang, Chuheng and Pearce, Tim and Zhang, Pushi and Wang, Kaixin and Chen, Xiaoyu and Shen, Wei and Zhao, Li and Bian, Jiang},
title = {What Do Latent Action Models Actually Learn?},
booktitle = {NeurIPS 2025},
year = {2025},
month = {May},
url = {https://www.microsoft.com/en-us/research/publication/what-do-latent-action-models-actually-learn/},
}

@misc{chen2025villax,
      title={villa-X: Enhancing Latent Action Modeling in Vision-Language-Action Models}, 
      author={Xiaoyu Chen and Hangxing Wei and Pushi Zhang and Chuheng Zhang and Kaixin Wang and Yanjiang Guo and Rushuai Yang and Yucen Wang and Xinquan Xiao and Li Zhao and Jianyu Chen and Jiang Bian},
      year={2025},
      eprint={2507.23682},
      archivePrefix={arXiv},
      primaryClass={cs.RO},
      url={https://arxiv.org/abs/2507.23682}, 
}

@article{liang2025clam,
  title={Clam: Continuous latent action models for robot learning from unlabeled demonstrations},
  author={Liang, Anthony and Czempin, Pavel and Hong, Matthew and Zhou, Yutai and Biyik, Erdem and Tu, Stephen},
  journal={arXiv preprint arXiv:2505.04999},
  year={2025}
}

@misc{yang2025como,
      title={CoMo: Learning Continuous Latent Motion from Internet Videos for Scalable Robot Learning}, 
      author={Jiange Yang and Yansong Shi and Haoyi Zhu and Mingyu Liu and Kaijing Ma and Yating Wang and Gangshan Wu and Tong He and Limin Wang},
      year={2025},
      eprint={2505.17006},
      archivePrefix={arXiv},
      primaryClass={cs.CV},
      url={https://arxiv.org/abs/2505.17006}, 
}

@misc{garrido2026,
      title={Learning Latent Action World Models In The Wild}, 
      author={Quentin Garrido and Tushar Nagarajan and Basile Terver and Nicolas Ballas and Yann LeCun and Michael Rabbat},
      year={2026},
      eprint={2601.05230},
      archivePrefix={arXiv},
      primaryClass={cs.AI},
      url={https://arxiv.org/abs/2601.05230}, 
}

@misc{ren2025videoworld,
      title={VideoWorld: Exploring Knowledge Learning from Unlabeled Videos}, 
      author={Zhongwei Ren and Yunchao Wei and Xun Guo and Yao Zhao and Bingyi Kang and Jiashi Feng and Xiaojie Jin},
      year={2025},
      eprint={2501.09781},
      archivePrefix={arXiv},
      primaryClass={cs.CV},
      url={https://arxiv.org/abs/2501.09781}, 
}

@misc{cai2025SSMVLA,
      title={Seeing Space and Motion: Enhancing Latent Actions with Spatial and Dynamic Awareness for VLA}, 
      author={Zhejia Cai and Yandan Yang and Xinyuan Chang and Shiyi Liang and Ronghan Chen and Feng Xiong and Mu Xu and Ruqi Huang},
      year={2025},
      eprint={2509.26251},
      archivePrefix={arXiv},
      primaryClass={cs.CV},
      url={https://arxiv.org/abs/2509.26251}, 
}

@misc{bu2025laof,
      title={LAOF: Robust Latent Action Learning with Optical Flow Constraints}, 
      author={Xizhou Bu and Jiexi Lyu and Fulei Sun and Ruichen Yang and Zhiqiang Ma and Wei Li},
      year={2025},
      eprint={2511.16407},
      archivePrefix={arXiv},
      primaryClass={cs.RO},
      url={https://arxiv.org/abs/2511.16407}, 
}

@misc{zhang2026clap,
      title={CLAP: Contrastive Latent Action Pretraining for Learning Vision-Language-Action Models from Human Videos}, 
      author={Chubin Zhang and Jianan Wang and Zifeng Gao and Yue Su and Tianru Dai and Cai Zhou and Jiwen Lu and Yansong Tang},
      year={2026},
      eprint={2601.04061},
      archivePrefix={arXiv},
      primaryClass={cs.RO},
      url={https://arxiv.org/abs/2601.04061}, 
}

@misc{bi2025motus,
      title={Motus: A Unified Latent Action World Model}, 
      author={Hongzhe Bi and Hengkai Tan and Shenghao Xie and Zeyuan Wang and Shuhe Huang and Haitian Liu and Ruowen Zhao and Yao Feng and Chendong Xiang and Yinze Rong and Hongyan Zhao and Hanyu Liu and Zhizhong Su and Lei Ma and Hang Su and Jun Zhu},
      year={2025},
      eprint={2512.13030},
      archivePrefix={arXiv},
      primaryClass={cs.CV},
      url={https://arxiv.org/abs/2512.13030}, 
}

@misc{routray2025vipra,
      title={ViPRA: Video Prediction for Robot Actions}, 
      author={Sandeep Routray and Hengkai Pan and Unnat Jain and Shikhar Bahl and Deepak Pathak},
      year={2025},
      eprint={2511.07732},
      archivePrefix={arXiv},
      primaryClass={cs.RO},
      url={https://arxiv.org/abs/2511.07732}, 
}

@book{barfoot2024state,
  title={State estimation for robotics},
  author={Barfoot, Timothy D},
  year={2024},
  publisher={Cambridge University Press}
}

@article{van2017neural,
  title={Neural discrete representation learning},
  author={Van Den Oord, Aaron and Vinyals, Oriol and others},
  journal={Advances in neural information processing systems},
  volume={30},
  year={2017}
}

@inproceedings{grill2020byol,
 author = {Grill, Jean-Bastien and Strub, Florian and Altch\'{e}, Florent and Tallec, Corentin and Richemond, Pierre and Buchatskaya, Elena and Doersch, Carl and Avila Pires, Bernardo and Guo, Zhaohan and Gheshlaghi Azar, Mohammad and Piot, Bilal and kavukcuoglu, koray and Munos, Remi and Valko, Michal},
 booktitle = {Advances in Neural Information Processing Systems},
 editor = {H. Larochelle and M. Ranzato and R. Hadsell and M.F. Balcan and H. Lin},
 pages = {21271--21284},
 publisher = {Curran Associates, Inc.},
 title = {Bootstrap Your Own Latent - A New Approach to Self-Supervised Learning},
 url = {https://proceedings.neurips.cc/paper_files/paper/2020/file/f3ada80d5c4ee70142b17b8192b2958e-Paper.pdf},
 volume = {33},
 year = {2020}
}

@misc{he2019momentum,
      title={Momentum Contrast for Unsupervised Visual Representation Learning}, 
      author={Kaiming He and Haoqi Fan and Yuxin Wu and Saining Xie and Ross Girshick},
      year={2020},
      eprint={1911.05722},
      archivePrefix={arXiv},
      primaryClass={cs.CV},
      url={https://arxiv.org/abs/1911.05722}, 
}

@article{mnih2015humanlevel,
  author = {Mnih, Volodymyr and Kavukcuoglu, Koray and Silver, David and Rusu, Andrei A. and Veness, Joel and Bellemare, Marc G. and Graves, Alex and Riedmiller, Martin and Fidjeland, Andreas K. and Ostrovski, Georg and Petersen, Stig and Beattie, Charles and Sadik, Amir and Antonoglou, Ioannis and King, Helen and Kumaran, Dharshan and Wierstra, Daan and Legg, Shane and Hassabis, Demis},
  description = {Human-level control through deep reinforcement learning - nature14236.pdf},
  issn = {00280836},
  journal = {Nature},
  month = feb,
  number = 7540,
  pages = {529--533},
  publisher = {Nature Publishing Group, a division of Macmillan Publishers Limited. All Rights Reserved.},
  title = {Human-level control through deep reinforcement learning},
  url = {http://dx.doi.org/10.1038/nature14236},
  volume = 518,
  year = 2015
}

@misc{zhang2025vlaseffectivelyinheritvlms,
      title={How Do VLAs Effectively Inherit from VLMs?}, 
      author={Chuheng Zhang and Rushuai Yang and Xiaoyu Chen and Kaixin Wang and Li Zhao and Yi Chen and Jiang Bian},
      year={2025},
      eprint={2511.06619},
      archivePrefix={arXiv},
      primaryClass={cs.RO},
      url={https://arxiv.org/abs/2511.06619}, 
}

@misc{wang2025cola_world,
  title        = "Co-Evolving Latent Action World Models",
  author       = "Anonymous Authors",
  howpublished = "Concurrent Submission to ICML",
  year         = 2026,
  note         = "Filename: colaworld.pdf"
}

@misc{MVP-LAM,
  title        = "MVP-LAM: Learning Action-Centric Latent Action via Cross-Viewpoint Reconstruction",
  author       = "Anonymous Authors",
  howpublished = "Concurrent Submission to ICML",
  year         = 2026,
  note         = "Filename: mvplam.pdf"
}

@article{li2025latbot,
  title={LatBot: Distilling Universal Latent Actions for Vision-Language-Action Models},
  author={Li, Zuolei and Gao, Xingyu and Wang, Xiaofan and Fu, Jianlong},
  journal={arXiv preprint arXiv:2511.23034},
  year={2025}
}

@article{chen2016xgboost,
  title={XGBoost: A Scalable Tree Boosting System},
  author={Chen, Tianqi},
  journal={Cornell University},
  year={2016}
}
\bibliographystyle{icml2026}

\newpage
\appendix
\onecolumn




\section{Training Details}

\subsection{LAM Training Details}
\label{appendix:LAM training}



Our LAM design largely follows villa-X LAM, augmented with an additive-composition (AC) loss and dynamic temporal intervals to support AC constraints. The IDM uses 12 Transformer encoder layers. Given an image pair $(o_i, o_j)$ (default $2\times3\times224\times224$), we apply a patch embedding with patch size 14, concatenate image tokens and stack 12 self-attention blocks (hidden dimension 768, 32 attention heads). The FDM is a 12-layer Vision Transformer (ViT-Base) that predicts $o_j$ from $(o_i, z_{ij})$. Following villa-X, we also employ a proprioceptive FDM: a 2-layer MLP with dual output heads that predict future robot states $q_j$, conditioned on $(q_i, z_{ij})$.


AC-LAM is trained on a mixture of human egocentric videos (e.g., Ego4D~[21]) and robot trajectories (e.g., OpenX~[12]). For scene-wise AC sampling, we draw triples $(i,j,k)$ from the same trajectory: robot temporal offsets are sampled uniformly from $[0.1, 3]\,\mathrm{s}$ and human offsets from $[0.1, 2]\,\mathrm{s}$. We further filter robot triples exhibiting large rotations so that additive composition remains a reasonable approximation. 
Given the inherent temporal smoothness of robot motion, this approximation effectively captures dynamics within the proposed time range.
We use a batch size of 512 and a learning rate of $1.5\times10^{-4}$ with a 2000-step linear warmup. Training lasts approximately 10 days on 32 NVIDIA A100 GPUs.

\subsection{Policy Training Details}
\label{appendix:policy training}


We select villa-x as the policy architecture for downstream policy learning, to assess latent action model's ability to provide high-quality supervision signals. 
The policy model in villa-x comprises three components. First, the vision–language encoder is based on PaliGemma[3], a 3B-parameter VLM pretrained with 224 × 224 images and 128-token text inputs. Second and third, the latent-action expert and the robot-action expert are each implemented as 18-layer Transformer networks, mirroring PaliGemma’s design, with a hidden dimension of 1,024 and 8 attention heads. For the latent action sequence, we select a sequence length of N = 6, and for the robot actions, we select a sequence length of M = 4. We apply the same random attention mask and random attention dropout techniques as in villa-x. We train all components jointly using a learning rate of 5e-5 with a 200-step linear warmup. We clip gradients to a maximum norm of 1.0 to ensure stable optimization. 

We did not pretrain the policy model on large-scale dataset. The goal here is use the model as a convenient policy learning method that can take both latent actions and robot actions as supervision signals. The policy learning follows the training data setup as mentioned in the experiment part. Each policy training with different LAMs takes 15K gradient steps, with a batch size of 512. To assess generalization under cross‑dataset transfer, we randomly form a 50\%/50\% mixture of the in-distribution dataset and Bridge V2 and use this combined corpus for training.

\section{Datasets for Latent Action Learning}

\label{appendix:LAM datasets}

\subsection{Data Mixture}

We follow the data mixture in villa-x, which combines both robot data and action-free human videos for our LAM pretraining phase. For robot data, we draw primarily from OpenX~\citep{open_x_embodiment_rt_x_2023} mixture  and AgiBot~\citep{agibot-world-contributors2025agibot}. For OpenX dataset, our base data mixture is created primarily based on~\citep{kim2024openvla, octo_2023}.
In total, we use 1.6M trajectories with 223.5M frames of robot data.
For human videos, we use a mixture of Ego4D \citep{grauman2022ego4d}, EgoPAT3D~\citep{Li_2022_CVPR}, EGTEA Gaze+~\citep{li2018eye}, EPIC-KITCHENS~\citep{damen2020epic}, HO-Cap~\citep{wang2024hocapcapturedataset3d}, HOI4D~\citep{Liu_2022_CVPR}, HoloAssist~\citep{HoloAssist2023}, RH20T~\citep{fang2023rh20t}, Something Something V2~\citep{goyal2017something}. 
Altogether, this yields 3.6M clips of human videos. 
During LAM pretraining, we exclusively utilize the primary third-person camera view. 
A full breakdown of our data mixture is listed in Table \ref{table:dataset}.

\subsection{Data Preprocessing}

For data cleaning, we adopt EgoHOD \citep{pei2025modeling}, a curated subset of Ego4D \citep{grauman2022ego4d}, and further filter the videos based on visual quality to ensure high-quality inputs for training. For both robot data and human videos, we apply random adjustments to brightness, contrast, saturation, and hue as data augmentation. In the case of robot data, we represent both proprioceptive states and actions using euler angles.

\begin{table}[htb!]
\centering
\begin{tabular}{cc}\toprule  
Dataset                                                         & Mix Ratio (\%) \\ \midrule
RT-1 Robot Action \citep{brohan2022rt1}                         & 9.70           \\
AgiBot World Beta \citep{agibot-world-contributors2025agibot}   & 20.0           \\
Kuka   \citep{kalashnikov2018scalable}                          & 1.97           \\
Bridge \citep{walke2023bridgedata,ebert2021bridge}              & 5.47           \\
Taco Play \citep{rosetebeas2022latent,mees2023grounding}        & 0.76           \\
Jaco Play  \citep{dass2023jacoplay}                             & 0.12           \\
Berkely Autolab UR5 \citep{BerkeleyUR5Website}                  & 0.31           \\
Language Table  \citep{lynch2023interactive}                  & 0.11           \\
Stanford Hydra Dataset \citep{belkhale2023hydra}                & 1.61           \\
NYU Franka Play Dataset \citep{cui2022play}                     & 0.22           \\
Furniture Bench Dataset  \citep{heo2023furniturebench}          & 0.63           \\
Austin Sailor Dataset \citep{nasiriany2022sailor}               & 0.57           \\
Austin Sirius Dataset  \citep{liu2022robot}                     & 0.45           \\
BC-Z \citep{jang2022bc}                                         & 3.47           \\
DLR EDAN Shared Control \citep{quere_shared_2020}               & 0.01           \\
CMU Stretch \citep{mendonca2023structured}                      & 0.04           \\
FMB Dataset \citep{luo2024fmb}                                  & 0.73           \\
DobbE \citep{shafiullah2023dobbe}                               & 0.37           \\
DROID \citep{khazatsky2024droid}                                & 3.46           \\
Ego4D \citep{grauman2022ego4d,pei2025modeling}                                 & 21.46           \\
EgoPAT3D~\citep{Li_2022_CVPR}                                     & 0.94           \\
EGTEA Gaze+~\citep{li2018eye}                                 & 0.89           \\
EPIC-KITCHENS~\citep{damen2020epic}                            & 6.95           \\
HO-Cap~\citep{wang2024hocapcapturedataset3d}                     & 0.63           \\
HOI4D~\citep{Liu_2022_CVPR}                                      & 1.99           \\
HoloAssist~\citep{HoloAssist2023}                                & 4.77           \\
RH20T~\citep{fang2023rh20t}                                      & 5.56           \\
Something-Something-V2~\citep{goyal2017something}               & 6.82           \\ \bottomrule
\end{tabular}\caption{ Our training data mixture used in LAM training. 
}
\label{table:dataset}
\end{table}

\section{More Details for Experiments on Latent Action Structure}
\label{appendix:latent_action_structure}


\paragraph{Sampling latent actions for Eq. 8 to 10}  
We adopt the same scene-wise $(i,j,k)$ sampling procedure used during training. For each dataset, we draw 16k latent-action instances (pairs or triplets, as required by each metric). Metrics are computed per instance and then averaged to approximate the corresponding expectations.

\paragraph{Alignment between latent-norm and true motion magnitude }



We first compute the motion magnitude as the Euclidean distance between the proprioceptive states $s_i$,$s_j$ to obtain $\|\Delta s_{ij}\|$
Next, we rescale $\|\Delta s_{ij}\|$ per dimension using dataset quantiles: values are normalized with respect to the 1st and 99th percentiles, with clipping below the 1st percentile and above the 99th to reduce the influence of outliers.
To make the Pearson correlation objective more stable and differentiable, we uniformly sampled $(i,j)$ pairs by motion magnitude so that the dataset spans a broad range of $\|\Delta s_{ij}\|$.

\begin{equation}
r\big(\|z\|,\,|\Delta s|\big)
= \frac{\operatorname{cov}\!\big(\|z\|,\,\|\Delta s\|\big)}{\sigma_{\|z\|}\,\sigma_{\|\Delta s\|}}
\end{equation}


\paragraph{Quantifying environment-specific leakage} We assess environment-specific leakage by training a simple latent probe to predict the data source (environment) from latent actions. From the latent probe dataset, we sample 100×32 latent action instances per environment across Fractal, Bridge, Kuka, and DROID. An XGBoost classifier is trained to predict the environment label from these latents, using a random 80/20 train/test split. We report test-set accuracy as the leakage metric, denoted $\mathrm{Acc}_{\mathrm{env}}^{\mathrm{mlp}}$. Higher $\mathrm{Acc}_{\mathrm{env}}^{\mathrm{mlp}}$.
indicates stronger environment-identifying signals present in the latents (i.e., greater leakage), whereas lower $\mathrm{Acc}_{\mathrm{env}}^{\mathrm{mlp}}$ suggests more environment-agnostic representations. All results are reported on the held-out 20\% test split.

\paragraph{More details for ablations on different design choices in AC-LAM}

We consider two variants of applying the stop-gradient mechanism to $\mathcal{L}_{\mathrm{AC\text{-}IDM}}$:
\[
\mathcal{L}_{\mathrm{AC\text{-}IDM}} = \| \operatorname{sg}(f(o_i,o_k)) - (f(o_i,o_j) + f(o_j,o_k)) \|_2^2
\]
and
\[
\mathcal{L}_{\mathrm{AC\text{-}IDM}} = \| f(o_i,o_k) - \operatorname{sg}(f(o_i,o_j) + f(o_j,o_k)) \|_2^2.
\]

For our pre-VQ ablations, the AC loss is applied to the continuous latent vectors immediately following the IDM encoder, prior to the discretization bottleneck of the vector quantizer. This contrasts with our default post-VQ approach, which constrains the quantized codebook embeddings. All ablation models are evaluated using the same sampling method with our main experiments.

\section{More Details for Benchmarks}
\label{appendix:benchmarks}


\begin{figure}[t]
  \centering
  \begin{subfigure}[t]{0.6\linewidth}
    \centering
    \includegraphics[width=\linewidth]{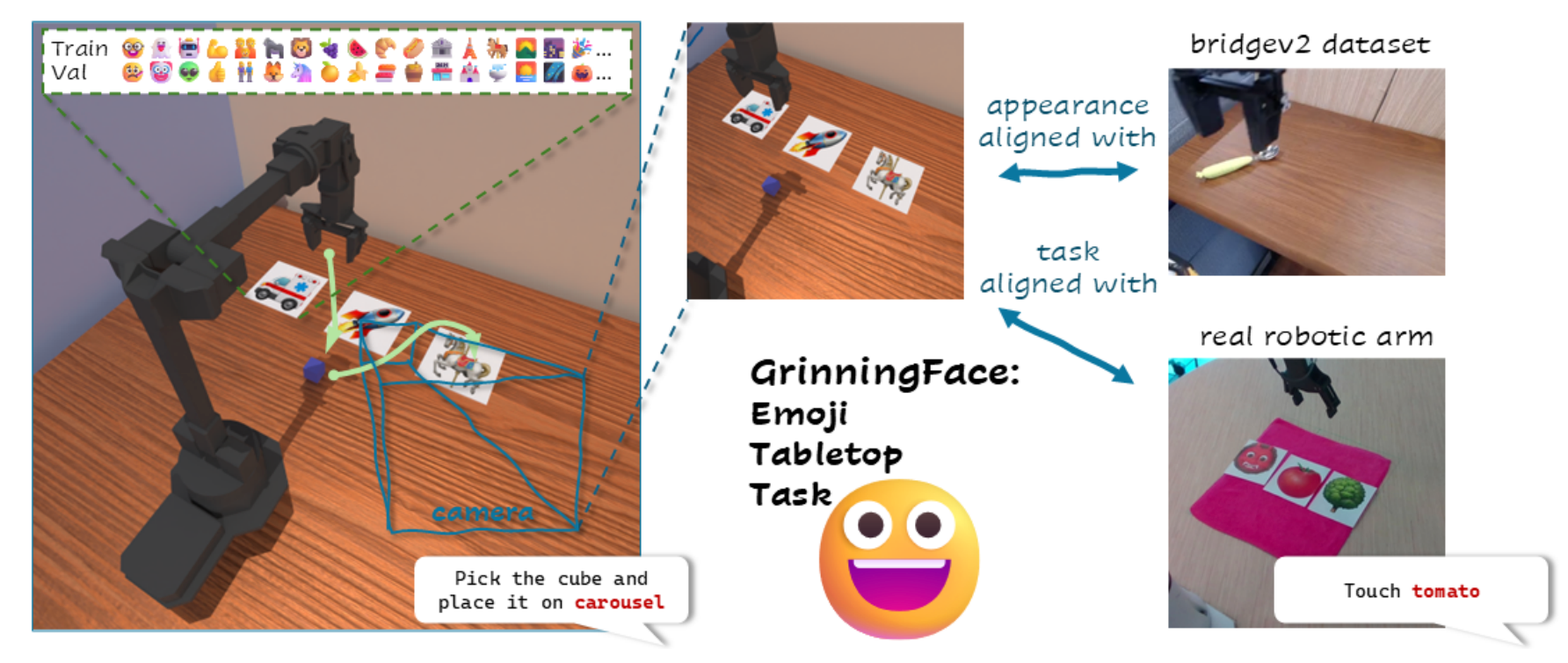}
    \caption{Emoji Table-Top (GrinningFace)}
    \label{fig:left}
  \end{subfigure}\hfill
  \begin{subfigure}[t]{0.35\linewidth}
    \centering
    \includegraphics[width=\linewidth]{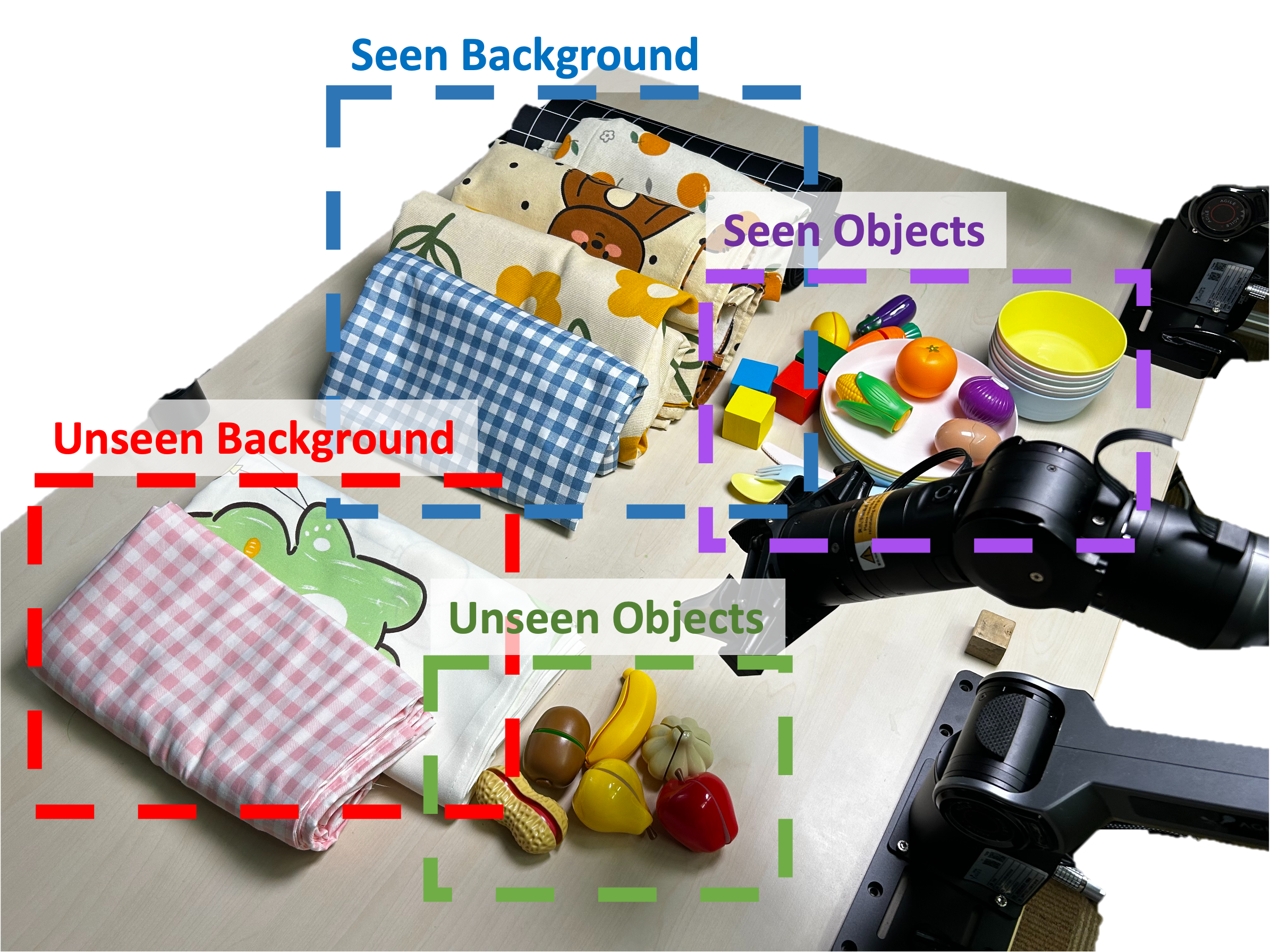}
    \caption{Real-World Tabletop Manipulation}
    \label{fig:right}
  \end{subfigure}
  \caption{Two experimental environments: (a) Emoji Table-Top (GrinningFace) simulation for controlled studies of vision–semantic generalization. A robotic arm picks a cube and places it on the instructed emoji. The viewpoint is aligned with Bridge‑v2 to leverage this large-scale dataset for knowledge transfer. 
  The initial positions of the cube, the emojis, and the robotic arm are randomized to test robustness.   (b) Real‑World Tabletop Manipulation featuring diverse pick tasks across varied objects and backgrounds; evaluations cover in‑distribution scenes, OOD distractors, and OOD backgrounds to assess robustness.}
  \label{fig:two-panel}
\end{figure}

\paragraph{Emoji Table‑Top (GrinningFace)} As shown in Figure~\ref{fig:left}~(we use the figure from \cite{zhang2025vlaseffectivelyinheritvlms}), it is a diagnostic simulation benchmark targets vision–semantic generalization in embodied control, evaluating how vision–language action models inherit priors from vision–language models. Each episode follows the instruction template “Pick the cube and place it on [desc.]”, where [desc.] is the language description of the target emoji. Three emoji cards are placed on the tabletop; success requires grasping the cube and placing it on the correct target card. To leverage Bridge dataset and enable knowledge transfer, the camera viewpoint is aligned with Bridge‑v2~\cite{walke2023bridgedata}.
The initial positions of the cube, the emojis, and the robotic arm are randomized to systematically test robustness. Evaluation follows three protocols defined by the benchmark: ID (in‑distribution combinations and order), Train (novel combinations composed from training‑set emojis), and Val (held‑out validation emojis that are out‑of‑distribution).

\textbf{Real‑World Tabletop Manipulation} A physical setup for evaluating accurate control and robustness under realistic variability. Experiments use an AgileX Robotics Piper arm featuring a 7‑DoF action space and focus on diverse pick tasks across varied objects and backgrounds. The dataset comprises 170 teleoperated trajectories, collected under varied tabletop settings—including different tablecloth textures/colors, object layouts, and object positions—to increase scene diversity and support robustness evaluation. Evaluations cover (i) in‑distribution scenes, (ii) OOD distractors (novel or repositioned non‑target objects), and (iii) OOD backgrounds (changes to tabletop/background appearance), enabling a comprehensive assessment of robustness.
To ensure statistical reliability, we report success rates averaged over 15 rollouts for each setting.

\section{Visualization}

\subsection{Motion transfer demo with summed latents $z_{ij} + z_{jk}$}

\begin{figure*}[htbp]
    \centering
    \includegraphics[width=0.9\linewidth]{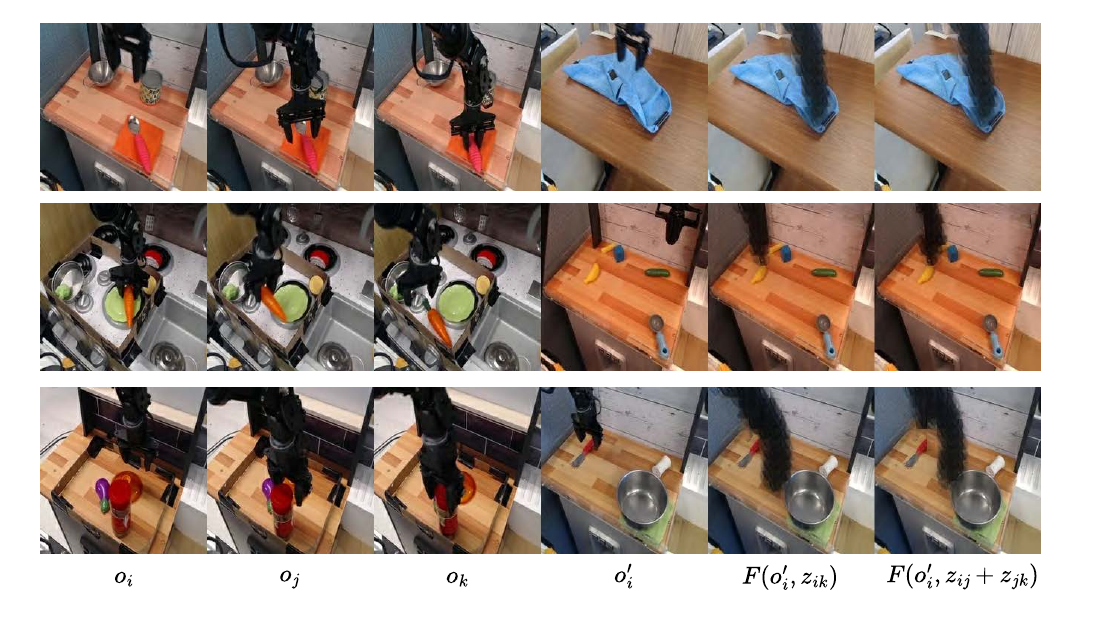}
    \caption{Motion Transfer Demo}
    \label{fig:ac_transfer_demo}
\end{figure*}

Figure \ref{fig:ac_transfer_demo} shows several motion transfer examples. Input frames $o_i$, $o_j$, and $o_k$ are sampled from the same trajectory in the BridgeV2 dataset. Latent actions $z_{ij}$, $z_{jk}$, and $z_{ik}$ are extracted using the IDM and then applied to another sampled frame $o'_i$ via $F(o'_i, z_{ik})$ and $F(o'_i, z_{ij} + z_{jk})$. The results show that the FDM outputs are consistent when using either the direct latent action or the composed latent action, indicating that the semantic meanings of the two paths are well aligned.

\subsection{Additional Latent‑Action Norm Trajectories: Emoji Table‑Top and Real‑World}


Figure~\ref{fig:latent-norm-trajectory-appendix} provides extended visualizations of the latent action norm $\|LAM(o_0,o_t)\|$ evolution in both the Emoji Table-Top (GrinningFace) simulation and real-world tabletop environments, corroborating the analysis in the main text. The trends are consistent: AC-LAM demonstrates the strongest displacement calibration. Villa-X shows a correlation but remains under-calibrated, while LAPA and UniVLA fail to meaningfully track displacement. 
Collectively, these trajectories illustrate that the latent‑action norm generated by AC-LAM provides an interpretable proxy for the amount of motion from the initial observation.

\begin{figure*}[htbp]
    \centering
    \includegraphics[width=0.8\textwidth]{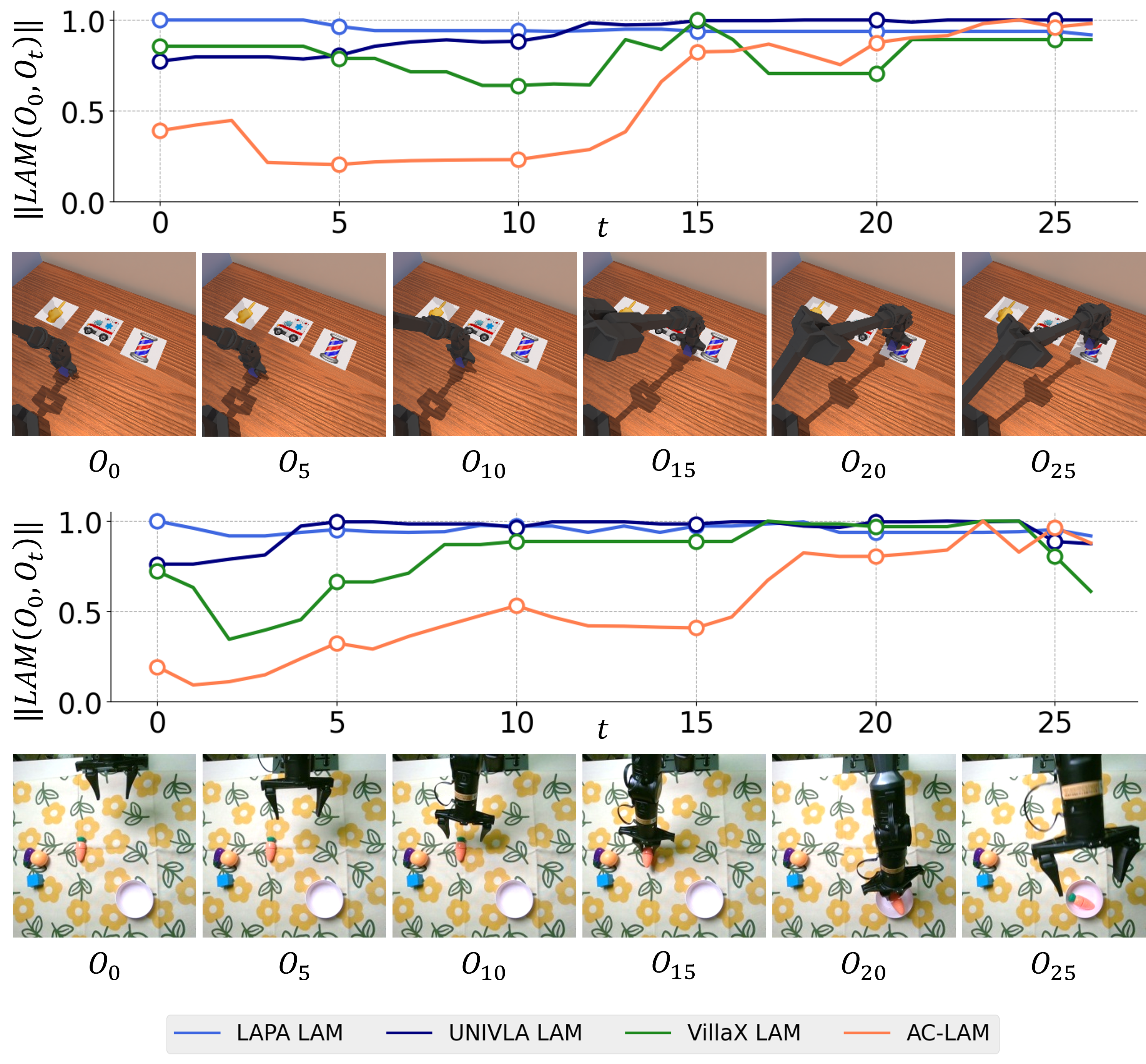}
    \caption{
    More trajectories of the latent action norm $\|LAM(o_0,o_t)\|$ in Emoji Table-Top simulation and real-world tabletop manipulation, with latent actions generated by LAPA LAM, UniVLA LAM, villa-X LAM and AC-LAM. AC‑LAM yields the most displacement‑calibrated latents, aligning with motion magnitude.}
    \label{fig:latent-norm-trajectory-appendix}
\end{figure*}





\end{document}